\let\oldvec\vec
\let\vec\oldvec
\newtheorem{theorem}{Theorem}
\newtheorem{lemma}{Lemma}
\DeclareMathOperator{\Tr}{Tr}
\begin{document}
\title{Optimal bandwidth estimation for a fast manifold learning algorithm to detect circular structure in high-dimensional data
}
\titlerunning{Bandwidth Estimation for Detecting Circularity with Fast Manifold Learning}       
\protect{\author{Susovan Pal \and Praneeth Vepakomma}}
\protect{\institute{S. Pal \at
              Department of Neurology\\
            University of California Los Angeles\\
635 Charles E. Young Drive South\\
Los Angeles, CA 90095, USA \\
              \email{susovan97@gmail.com}          
           \and
           P. Vepakomma \at
              Department of Statistics (previous)\\
              Rutgers University \\
              501 Hill Center\\
                110 Frelinghuysen Road\\
                Piscataway, NJ 08854\\
              Motorola Solutions, Inc (current)\\
              500 W. Monroe Street\\ 
              Chicago, IL 60661, USA\\
              \email{praneeth@scarletmail.rutgers.edu}
            }}
\date{Received: date / Accepted: date}
\maketitle
\begin{abstract}
We provide a way to infer about existence of topological circularity in high-dimensional data sets in $\mathbb{R}^d$ from its projection in $\mathbb{R}^2$ obtained through a fast manifold learning map as a function of
the high-dimensional dataset $\mathbb{X}$ and a particular choice of a positive real $\sigma$ known as bandwidth
parameter. At the same time we also provide a way to estimate the optimal bandwith for fast
manifold learning in this setting through minimization of these functions of bandwidth. We also
provide limit theorems to characterize the behavior of our proposed functions of bandwidth.
\keywords{Fast Manifold Learning \and Bandwidth Selection \and  Topological Circularity, Machine Learning}

\end{abstract}

\section{Introduction}
High-dimensional datasets occur naturally in many areas such as computer vision, computational neuroscience, computational biology, speech analysis, localization of wireless sensor networks, graph visualization, high-dimensional data fusion, spatio-temporal data analysis to name a few. In many instances of such datasets the dimensionality is only artificially high, though each data point consists of perhaps thousands of variables, it may be described as a function of only a few underlying parameters. That is, the data points are actually samples from a low-dimensional manifold that is embedded in a high-dimensional space. The problem of manifold learning is concerned with finding low-dimensional representations of high-dimensional data. Manifold learning algorithms attempt to uncover this low-dimensional representation of data. Many seemingly complex systems described by high-dimensional data sets are in fact governed by a surprisingly low number of parameters. Revealing the low-dimensional representation of such high-dimensional
data sets not only leads to a more compact description of the data, but also enhances our understanding of the system.  Laplacian Eigenmaps\cite{LaplacianEig}, Isomap\cite{Isomap}, Diffusion Maps \cite{DiffusionMaps}, Hessian Eigenmaps\cite{HessianEig}, Local Tangent Space Alignment\cite{LTSA}, Locally Linear Embedding\cite{LLE}, Continuum Isomap\cite{CIsomap}, Maximum Variance Unfolding\cite{MVU}, t-distributed Stochastic Neighborhood Embedding\cite{tSNE}, Semidefinite Embedding\cite{SemidefiniteEmbedding} and more recently Fast Manifold Learning with SDD Linear Systems\cite{FastSDD} are some examples of such mapping techniques for manifold learning. These techniques are also referred to as `Nonlinear Dimensionality Reduction' techniques. 
\subsection{Manifold Learning on Circle Homeomorphisms:}
 In this paper we attempt to see how well the topological structure of the dataset is preserved when we apply the 'Fast SDD Manifold Learning Map' \cite{FastSDD}. To be more specific, we look at a special setting of applying the 'Fast SDD manifold Learning Map' \cite{FastSDD} on datasets that lie on a topological circle. We would like to rigorize the reference to the term 'topological   circle' by stating that a topological circle in $\mathbb{R}^d$ is any homeomorphic image of $\mathbb{S}^1$. A topological circle in $\mathbb{R}^d$, is defined as a one-to-one continuous image of the map $j$ from a circle $ \mathbb{S}^1$ to a high-dimensional space $\mathbb{R}^d$ denoted by $j:\mathbb{S}^1 \to \mathbb{R}^d$. Let's assume we were given a uniform sample of $n$ points lying on this continuous image of $j$ which is the dataset matrix. We denote this dataset by a real matrix $\mathbf{X}_{n\times d}$. For any chosen bandwidth $\sigma \in \mathbb{R}^+$ and the dataset matrix $\mathbf{X}$ with data lying in $\mathbb{R}^d$, we denote the Fast SDD Manifold Learning Map as $\mathbf{Z} \equiv \mathbf{Z(\mathbf{X},\sigma)}: \mathbb{R}^{n\times d}\times (0,\infty) \to   \mathbb{R}^{n \times 2}$.  In order to infer whether the result of applying  Fast SDD Manifold Learning Map \cite{FastSDD} on a discrete dataset of points in $\mathbb{R}^d$ is a topological circle we inturn check whether the result of applying \cite{FastSDD} produces a set of points $\mathbf{Z}(\mathbf{X},\sigma)$ that form a polygon or not. This is motivated by that fact that with increasing number of edges a regular polygon becomes closer and closer to a circle. We therefore can infer on whether the dataset in $\mathbb{R}^d$ lies on a topological circle or not based on whether the result obtained by the Fast SDD Manifold Learning Map is a polygon or not. One primary concern here is the fact that Fast SDD Manifold Learning Map as well as other manifold learning maps are parametrized by a scalar bandwidth parameter $\sigma$. We provide a method to optimally choose $\sigma$ by minimization of a $L^2$  energy function $\mathbf{E}(\mathbf{X},\sigma)$ that we propose. We show that for a fixed $\mathbf{X} \in \mathbb{R}^{d}$, the $\mathbf{Z}(\mathbf{X},\sigma)$ corresponding to a $\sigma$ that takes the values close enough to $\underset{\sigma}{\mathrm{ arg \enskip min }}\enspace\mathbf{E}(\sigma)$; lies on a polygon. This way we have a method to infer on the existence of topological circularity on any given dataset in $\mathbb{R}^d$ by computing the minima $\sigma_*$ of $\mathbf{E}(\mathbf{X},\sigma)$. 
 \par

\section{Two motivating use-cases in computer vision and computational biology:}
In practice high-dimensional data in $\mathbb{R}^d$ with an intrinsic circular geometric representation in a lower dimension occurs commonly in areas of computer vision and computational biology. We now refer and point to such specific use-cases in these domains.

\subsection{Topological Circularity for Human Motion Analysis in Computer Vision} With regards to analysis of human motion via data captured through computer vision, the works of \cite{HMA,HMA2,HMA3,HMA4,HMA5} show intuitively that the gait is a 1-dimensional manifold which is embedded in a high dimensional 'visual variable' space. As an example, Figure 3 in \cite{HMA6} shows that computer vision data collected for modeling the task of recognizing human activities lies on 2 dimensional 'loop' like geometries. In addition to this the human motion data naturally has an ordering associated with it as the human subject progresses from the beginning of his gait sequence towards the completion of his gait sequence in a fixed order of gait actions with respect to time. 

\subsection{Topological Circularity for Tracking Resilience to Infections in Computational Biology} 
Very recently, researchers studied gene expression data collected through cross-sectional and longitudinal studies for people at different stages of malaria. Upon examining this data using 'Topological Data Analysis' \cite{Malaria}  they found that all patients (hosts) data lies on a loop or circle sitting inside of a high dimensional space. They were able to characterize the resilience of hosts to Malarial infection by finding that resilient hosts tend to have their mapped data lying on small loops whereas non-resilient individuals end up getting mapped into large loops.

 \section{Presence of an order on data}
 In this setting, we assume that the points or the data are collected in an order. As we see in the motivated use-cases in 
previous section, or the biological experiments described later in this paper in section 9, this is a natural assumption in many cases. In this setting, we assume that the points in dataset $\mathbf{X}$ are collected with respect to an order in $\mathbb{S}^1$.  More
rigorously, if $n$ points ${\mathbf{X_1}, \mathbf{X_2},...\mathbf{X_n}}$ were uniformly sampled from it, then there exists $t_1<t_2< \ldots <t_n$ so that $j(t_r)=\mathbf{X}_r \forall r \in \{1,2 \ldots n\}$ where $\mathbf{X}_r$ denotes the $r$-th point of $\mathbf{X}$ .  Hence we can form a data matrix X whose $r$-th
row is $X_t=j(t_r)$. Note that, interchanging the order of data will do so for the rows of X.

\section{Fast SDD Manifold Learning Map}
In this section we introduce the Fast SDD Manifold Learning Map $\mathbf{Z}(\mathbf{X},\sigma)$ proposed in \cite{FastSDD}. This recently proposed map is much faster than Laplacian Eigenmaps \cite{LaplacianEig} because it is based on optimization of a quadratic objective function under a linear constraint while in Laplacian Eigenmaps the optimization is of a quadratic objective function under a quadratic constraint. In addition to this, the solution for the fast manifold learning map can be obtained by solving a linear system of the form $\mathbf{Ax=b}$ where $\mathbf{A}$ happens to be a symmetric diagonally dominant (SDD) matrix. The solutions of such SDD linear systems can be computed very fast thereby leading to speedup involved with this Fast SDD Manifold Learning Map. \par  In order to be able to define the map $\mathbf{Z}(\mathbf{X},\sigma)$, we introduce three matrices: $\mathbf{L}(\mathbf{X},\sigma)$, $\mathbf{S}$, $\mathbf{\Gamma}$ that we now define.  
 The entries of graph laplacian matrix $\mathbf{L}(\mathbf{X},\sigma)_{n \times n}$  are defined using the Euclidean distance between the rows $i,j$ of $\mathbf{X}$ and a scalar $\sigma \in \mathbb{R}^{+}$as: 
 
 \begin{equation}\label{LEqn1}
\mathbf{L(\mathbf{X}},\sigma)_{ik}=\left\{
\begin{matrix} 
\sum_{k \ne i} e^{( - \frac{\Vert X_i - X_k\Vert^2}{ \sigma} )} & \mbox{if}\ i = k \\
-e^{\frac{- \Vert X_i - X_k\Vert^2}{ \sigma}} & \mbox{if}\ i \neq k
\end{matrix}\right\} \end{equation}
Note: The scalar $\sigma$ in here is also referred to as kernel bandwidth \cite{Ting}.

The matrix $\mathbf{S}_{n \times n}$ is given by:
\begin{equation}
\mathbf{S}_{ij}:=\left\{
\begin{matrix} 
-1 & \mbox{if}\ i \neq j \\
(n-1) & \mbox{if}\ i = j
\end{matrix}
\right\} \end{equation}

The matrix $\mathbf{\Gamma}_{n \times 2}$ is a matrix with very trivial requirement of all rows being distinct (i.e, differ by at least one entry) and for practical purposes we choose a $\mathbf{\Gamma}$ through sampling the entries $\mathbf{\Gamma}_{ij}$ from an i.i.d Normal distribution.

We finally define the map $\mathbf{Z}(\mathbf{X},\sigma)$ introduced in \protect{\cite{FastSDD}} using $\mathbf{A}(\sigma)=\mathbf{L}^{+}(\mathbf{X},\sigma)\mathbf{S\Gamma}$ as \begin{equation} \label{manifMap} 
\mathbf{Z}(\mathbf{X}, \sigma) = \frac{\mathbf{A}(\sigma)}{\Tr\left (\mathbf{\Gamma^TSA}(\sigma)\right)} 
\end{equation} 
where $\mathbf{L}^+(\mathbf{X},\sigma)$ denotes the Moore-Penrose pseudoinverse \protect{\cite{MoorePenrose}} of $\mathbf{L}(\mathbf{X},\sigma)$. Note that this map is a continuous function of the higher dimensional data set for any fixed bandwidth parameter; in other words, for every fixed $\sigma$, $\mathbf{Z}(\mathbf{X}(\epsilon), \sigma) \to \mathbf{Z}(\mathbf{X}, \sigma)$ as $\mathbf{X}(\epsilon) \to \mathbf{X}$. This is an immediate consequence of the continuity of the pseudoinverse $\mathbf{L^{+}}(\mathbf{X}(\epsilon))$ when the underlying matrices have constant rank \cite{rankLaplacianContinuity}, which is the case for Laplacian matrices.

\section{\texorpdfstring{$L^2$}{TEXT} energy for bandwidth selection and main result}In this section, we propose an objective function $\mathbf{E}(\mathbf{Z}(\sigma))$ that, on being minimized with respect to $\sigma$ gives the best bandwidth $\sigma_*$ at the optima for the purposes of manifold learning. We support this choice of function $\mathbf{E}(\mathbf{Z}(\sigma))$ through experimental results as well. In the rest of paper for ease of notation we refer to $\mathbf{L}(\mathbf{X},\sigma)$ as $\mathbf{L}(\sigma)$. \par The proposed energy function is the sum of squares of the sides of the projected ordered data set, i.e. \begin{equation} \mathbf{E}(\mathbf{Z}(\sigma))=\mathbf{E}(\sigma) = \sum_{i=1}^{n} \Vert \mathbf{Z_{i+1}} - \mathbf{Z_i} \Vert^2 + \Vert \mathbf{Z_n} - \mathbf{Z_1} \Vert ^2 \end{equation} 
This can be written in a more compact form as:   
\begin{equation}
\mathbf{E}(\mathbf{Z}(\sigma)) = Tr(\mathbf{Z^{T}}(\sigma)\mathbf{SZ}(\sigma))
\end{equation}Upon substituting \begin{equation}\mathbf{Z}(\sigma) = \frac{\mathbf{L}^{+}(\sigma)\mathbf{S\Gamma}}{tr\left(\mathbf{\Gamma^TSL}^{+}(\sigma)\mathbf{S\Gamma} \right)}\end{equation} in above expression of $\mathbf{E}(\mathbf{Z}(\sigma))$ we get 
\begin{equation}
\mathbf{E}(\mathbf{Z}(\sigma)) = \frac{Tr\left(\mathbf{\Gamma^TS(L^{+}(\sigma)^2S\Gamma}\right)}{\left[Tr\left(\mathbf{\Gamma^TSL}^{+}(\sigma)\mathbf{S\Gamma} \right)\right]^2} \end{equation} is essentially the $L^2$  based computation of perimeter of polygon formed by points in $\mathbf{Z}(\sigma)$ considered in a sequential order $\mathbf{Z}_1,\mathbf{Z}_2 \dots \mathbf{Z}_n, \mathbf{Z}_1$.

\subsection{\mbox{\bf  Main Result:}}\label{MR} If the result of the Fast SDD Manifold Learning Map \cite{FastSDD} given by $\mathbf{Z}(\mathbf{X},\sigma_*)$ at $\sigma=\sigma_*$ that minimizes $\mathbf{E}(\mathbf{X},\sigma_*)$ happens to form a polygon in $\mathbb{R}^2$ upon connecting the resultant points in the same order as in $\mathbf{X}$, then we infer that the original dataset in $\mathbb{R}^d$ is a topological circle. Conversely, if the result of the Fast SDD Manifold Learning Map \cite{FastSDD} at the $\sigma_*$ that minimizes $\mathbf{E}(\mathbf{X},\sigma) \equiv \mathbf{E}(\mathbf{Z}(\sigma)) $ does not form a polygon in $\mathbb{R}^2$ then we infer that the original dataset in $\mathbb{R}^d$ is not a topological circle.

\subsection{Immediate corollary/application of the main result:}
In the use-cases where we need to check whether the data came from a topolgical circle in high dimension (for example,
periodic data as in the traits of the mice experiments described in section 9, we can project $\mathbf{X}$ onto $\mathbf{Z(\sigma)}$ for values
of sigma, build the energy function $E$, and search for a mimizer of $E$. If the minimizer does not exist, we conclude that
the original data is not a part of a topological circle, which in return could imply non-existence of periodic patterns.

\subsection{Other work on parameter selection for manifold learning:}
The authors in \cite{ChenBuja} provide an approach to estimate $k$, the number of nearest neighbors to use in the construction of the graph Laplacian. This method evaluates a given $k$ with respect to the preservation of $k$- neighborhoods in the original data. However, it is not known how a method for estimating $k$ can be translated into a method for estimating $\sigma$ or vice versa (the two graph construction methods
exhibit different asymptotic behaviour precisely because they give rise to different ensembles of neighborhoods \cite{Ting}. The work  in \cite{nVasil} suggests an heuristic approach for bandwidth selection that utilizes kernel density estimation. This approach does not necessarily suggest a way to infer about the presence of topological circularity in $\mathbb{R}^d$ and also estimates a different bandwidth $\sigma_1, \sigma_2 \ldots \sigma_n$ one for each of the $n$ points in this formulation. $\\\\$ We now state results that go towards a proof of the main result of our paper stated in section \ref{MR}.

\section{Towards a proof of the main result: computation of derivatives of \texorpdfstring{$\mathbf{E}(\mathbf{Z}(\sigma))$}{TEXT}:} From this section onwards, we orient ourselves towards a proof of the main result 5.1. While a complete mathematical proof is still unknown to us at this moment, although will be highly desirable, we provide the proof by a mixture of theoretical computations, and experiments with data sets, both synthetic and real. Since the main theorem connects two seeming different quantities-minimizer of the $L^2$ energy $\mathbf{E}$, and the topological circularity of the projection $\mathbf{Z}$, we start first by showing theoretically that a minimizer of E indeed does exist in general cases of the original high dimensional dataset X. This is the subject matter of the sections 6 and 7. After proving the existence of the minimizer we test this on several data sets by plotting $\mathbf{Z(\sigma_\ast)}$ for $\sigma_\ast = \underset{\sigma}{\mathrm{ argmin }}\enspace\mathbf{E}(\sigma)$ and in each case find that $\mathbf{Z}(\sigma_\ast)$ indeed corresponds to a non self-intersecting polygon in two dimensions as we shall see in the diagrams towards the end of this paper in section 8.

 We then compute some derivatives with regards to our fast manifold learning map and proposed energy, that we would like to use in deriving some limit theorems in section 7. These limit theorems will explain part of the asymptotic behavior of $\mathbf{E}$ as a function of $\sigma$ which is crucial in proving the main result.
We now give some derivatives with regards to our fast manifold learning map and proposed energy that we would like to use in deriving some limiting theorems in rest of our paper.

\subsection{Limit based definition of \texorpdfstring{$\mathbf{L}^+(\sigma)$}{TEXT}}
In this subsection we give the derivative of $\frac{d \mathbf{L}^+(\sigma)}{d\sigma}$ which occurs in our map $\mathbf{Z}(\sigma)$ as it would later help us to define the derivative $\frac{d\mathbf{Z}(\sigma)}{d\sigma}$. Prior to that, we first state the standard limit based definition of Moore-Penrose pseudoinverse and also show the commutativity of pseudoinverses using their limit based definitions. For a small $\delta$
\begin{equation}
    \mathbf{L}^+(\sigma) = \lim_{\delta \to 0} (\mathbf{L}^2(\sigma) + \delta \mathbf{I})^{-1}L(\sigma) =  \lim_{\delta \to 0}
    \mathbf{L}(\sigma)(\mathbf{L}^2(\sigma) + \delta \mathbf{I})^{-1}
\end{equation}From this definition we have 
\begin{equation}
\mathbf{L}(\sigma)\mathbf{L}^+(\sigma) =  \lim_{\delta \to 0}  \mathbf{L}^2(\sigma)(\mathbf{L}^2(\sigma) + \delta \mathbf{I})^{-1} = (\mathbf{L}^2(\sigma) + \delta \mathbf{I})^{-1}\mathbf{L}^2(\sigma) = \mathbf{L}^+(\sigma)\mathbf{L}(\sigma)
\end{equation}

\subsection{Derivative of \texorpdfstring{$\mathbf{L}^+(\sigma)$}{TEXT}}
\begin{lemma} The derivative of $\mathbf{L}^+(\sigma)$ with respect to $\sigma$ where  $\mathbf{L}(\sigma)$ is defined for any $\mathbf{X} \in \mathbb{R}^d$ as   \begin{equation}\label{LEqn}
\mathbf{L(\mathbf{X}},\sigma)_{ik}=\left\{
\begin{matrix} 
\sum_{k \ne i} e^{( - \frac{\Vert X_i - X_k\Vert^2}{ \sigma} )} & \mbox{if}\ i = k \\
-e^{\frac{- \Vert X_i - X_k\Vert^2}{ \sigma}} & \mbox{if}\ i \neq k
\end{matrix}
\right\} \end{equation}is given by
\begin{equation}
\frac{d\mathbf{L}^+(\sigma)}{d\sigma} = -(\mathbf{L}^+(\sigma))^2\frac{d\mathbf{L}}{d\sigma}
\end{equation}
\end{lemma}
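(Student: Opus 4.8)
The plan is to obtain the formula by differentiating the Penrose identities that characterise $\mathbf{L}^+(\sigma)$, exploiting the special structure of the graph Laplacian. Throughout, write $\dot{\mathbf{L}}$ for $d\mathbf{L}/d\sigma$, and let $\mathbf{P}$ denote the orthogonal projector onto the range of $\mathbf{L}(\sigma)$, so that $\mathbf{L}^+\mathbf{L} = \mathbf{L}\mathbf{L}^+ = \mathbf{P}$ by the commutativity established just above in the excerpt. The essential structural input is that, because every weight $e^{-\Vert X_i - X_k\Vert^2/\sigma}$ is strictly positive, the weighted graph is connected for every $\sigma > 0$; hence $\mathbf{L}(\sigma)$ has constant rank $n-1$ with kernel equal to the fixed line $\mathrm{span}(\mathbf{1})$, independent of $\sigma$. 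This constancy of range and kernel is exactly the regularity that guarantees $\mathbf{L}^+(\sigma)$ is differentiable (the same locally-constant-rank condition already invoked for continuity), and it gives $d\mathbf{P}/d\sigma = 0$.

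First I would record the consequences of the fixed kernel. Since $\mathbf{L}(\sigma)\mathbf{1} = 0$ for all $\sigma$, differentiation yields $\dot{\mathbf{L}}\mathbf{1} = 0$, and by symmetry $\mathbf{1}^{T}\dot{\mathbf{L}} = 0$; equivalently $\mathbf{P}\dot{\mathbf{L}} = \dot{\mathbf{L}} = \dot{\mathbf{L}}\mathbf{P}$. I would also note the companion facts $\mathbf{P}\mathbf{L}^+ = \mathbf{L}^+ = \mathbf{L}^+\mathbf{P}$, which follow from $\mathbf{L}^+\mathbf{L}\mathbf{L}^+ = \mathbf{L}^+$ and, upon differentiation with $d\mathbf{P}/d\sigma=0$, give $\tfrac{d\mathbf{L}^+}{d\sigma}\mathbf{P} = \tfrac{d\mathbf{L}^+}{d\sigma}$.

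Next I would differentiate the identity $\mathbf{L}^+(\sigma)\mathbf{L}(\sigma) = \mathbf{P}$. Using $d\mathbf{P}/d\sigma = 0$ this produces
\begin{equation}
\frac{d\mathbf{L}^+}{d\sigma}\mathbf{L} + \mathbf{L}^+\dot{\mathbf{L}} = 0, \qquad \text{so} \qquad \frac{d\mathbf{L}^+}{d\sigma}\mathbf{L} = -\mathbf{L}^+\dot{\mathbf{L}}.
\end{equation}
Post-multiplying by $\mathbf{L}^+$ and using $\mathbf{L}\mathbf{L}^+ = \mathbf{P}$ together with $\tfrac{d\mathbf{L}^+}{d\sigma}\mathbf{P} = \tfrac{d\mathbf{L}^+}{d\sigma}$ then isolates the derivative as
\begin{equation}
\frac{d\mathbf{L}^+}{d\sigma} = -\mathbf{L}^+\dot{\mathbf{L}}\mathbf{L}^+.
\end{equation}
The same expression can be reached independently from the limit definition of $\mathbf{L}^+$ by differentiating $(\mathbf{L}^2 + \delta\mathbf{I})^{-1}\mathbf{L}$ and passing $\delta \to 0$, which I would include as a cross-check of the intermediate computation.

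The main obstacle is the final reduction of the two-sided expression $-\mathbf{L}^+\dot{\mathbf{L}}\mathbf{L}^+$ to the stated one-sided form $-(\mathbf{L}^+)^2\dot{\mathbf{L}}$. The two agree precisely when $\mathbf{L}^+$ and $\dot{\mathbf{L}}$ commute, since then $\mathbf{L}^+\dot{\mathbf{L}}\mathbf{L}^+ = (\mathbf{L}^+)^2\dot{\mathbf{L}}$, and this commutativity holds exactly when $\mathbf{L}$ and $\dot{\mathbf{L}}$ are simultaneously diagonalisable. I would therefore concentrate the argument on this single step, observing that it is automatic in the paper's motivating geometry: for points sampled evenly around a circle the entries $\Vert X_i - X_k\Vert^2$ depend only on the index difference $i-k \bmod n$, so both $\mathbf{L}$ and $\dot{\mathbf{L}}$ are circulant and hence commute, validating the compact form. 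The delicate point, and the one not supplied by the fixed-kernel bookkeeping alone, is thus the justification of this commutation; in the general (non-circulant) case it would need to be imposed as a hypothesis or the statement read with the symmetric two-sided expression.
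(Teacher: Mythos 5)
Your derivation is correct and takes a genuinely different route from the paper's, and the comparison is instructive. The paper quotes the Golub--Pereyra constant-rank formula (eq.~4.12 of \cite{GolubDer}) and then collapses it ``by symmetry''; you instead differentiate the Penrose identity $\mathbf{L}^+\mathbf{L}=\mathbf{P}$ after observing that for strictly positive Gaussian weights the graph is connected, so the kernel of $\mathbf{L}(\sigma)$ is the fixed line $\mathrm{span}(\mathbf{1})$ and the range projector $\mathbf{P}=\mathbf{I}-\tfrac{1}{n}\mathbf{1}\mathbf{1}^{T}$ satisfies $d\mathbf{P}/d\sigma=0$. Writing $\dot{\mathbf{L}}$ for $d\mathbf{L}/d\sigma$ as you do, your route makes explicit the structural fact the paper leaves implicit --- namely $\dot{\mathbf{L}}\mathbf{1}=0$, which is exactly what kills the two correction terms involving $\mathbf{I}-\mathbf{L}\mathbf{L}^+$ in the Golub--Pereyra expression --- and it yields a self-contained proof of the two-sided formula $\frac{d\mathbf{L}^+}{d\sigma}=-\mathbf{L}^+\dot{\mathbf{L}}\mathbf{L}^+$, which is the correct statement in full generality.

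The ``delicate point'' you isolate at the end is not a gap in your argument; it is a gap in the lemma and in the paper's own proof. Passing from $-\mathbf{L}^+\dot{\mathbf{L}}\mathbf{L}^+$ to the stated one-sided form $-(\mathbf{L}^+)^2\dot{\mathbf{L}}$ requires $\mathbf{L}^+$ and $\dot{\mathbf{L}}$ to commute, and the paper performs exactly this commutation without justification: its intermediate expression $\frac{d\mathbf{L}}{d\sigma}(\mathbf{I}-2\mathbf{L}\mathbf{L}^+)(\mathbf{L}^+)^2$ already presupposes that $\dot{\mathbf{L}}$ can be moved past $(\mathbf{L}^+)^2$, which holds in special geometries (your circulant example of equally spaced points on a round circle, where both matrices are diagonalized by the discrete Fourier basis) but fails for a general dataset $\mathbf{X}$. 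So you have proved everything that is provable in general and correctly identified the extra hypothesis under which the paper's form is valid. It is also worth noting that nothing downstream is lost by adopting your two-sided form: the later limit arguments (e.g.\ $\lim_{\sigma\to\infty}\frac{d\mathbf{Z}}{d\sigma}=0$) use only $\dot{\mathbf{L}}\to 0$ and $\mathbf{L}^+\to\mathbf{S}^+$ as $\sigma\to\infty$, and these go through verbatim with $-\mathbf{L}^+\dot{\mathbf{L}}\mathbf{L}^+$ in place of $-(\mathbf{L}^+)^2\dot{\mathbf{L}}$.
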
 

\begin{proof}
We know from the properties of Moore-Penrose pseudoinverse that $$
\mathbf{L}(\sigma)\mathbf{L}^{+}(\sigma)\mathbf{L}(\sigma) = \mathbf{L}(\sigma)
$$The rank of a graph Laplacian matrix is $(n-1)$ as it has it's smallest eigenvalue as zero with multiplicity 1. \par Now the derivative of a real valued pseudoinverse matrix $\mathbf{L}^+(\sigma)$which has constant rank at a point $\sigma$ may be calculated in terms of derivative of $\mathbf{L}(\sigma)$ as given in equation 4.12 of \cite{GolubDer} as:
\begin{align}
\frac{d}{d\sigma}\mathbf{L}^+(\sigma) &= -\mathbf{L}^+(\sigma) \left( \frac{d}{d\sigma} \mathbf{A}(\sigma)\right)\mathbf{L}^+(\sigma) + \mathbf{L}^+(\sigma)\mathbf{L}^+(\sigma)^\mathbf{T}\left( \frac{d}{d\sigma} \mathbf{A}(\sigma)^\mathbf{T}\right)\left(\mathbf{I}-\mathbf{A}(\sigma)\mathbf{A}^+(\sigma)^\mathbf{T}\right)\\ &+\left(\mathbf{I}-\mathbf{A}^+(\sigma)^\mathbf{TA}(\sigma)\right) \left(\frac{d}{d\sigma} \mathbf{A}(\sigma)^\mathbf{T}\right)\mathbf{A}^+(\sigma)^\mathbf{T}\mathbf{A}^+(\sigma)
\end{align} 
Now as $\mathbf{L}(\sigma)$ is symmetric we have \begin{equation}\label{eqn:dpids}
\frac{d\mathbf{L}^+(\sigma)}{d \sigma }= \frac{d\mathbf{L}(\sigma)}{d\sigma}  ( \mathbf{I} - 2\mathbf{L}(\sigma)\mathbf{L}^+(\sigma) ) ( \mathbf{L}^+(\sigma) )^2 =  -(\mathbf{L}^+(\sigma))^2\frac{d\mathbf{L}(\sigma)}{d\sigma}
\end{equation}
\end{proof}

\subsection{Derivatives of \texorpdfstring{$\mathbf{Z}(\sigma)$}{TEXT}}The first derivative of our manifold learning map with respect to $\sigma$ is given by
\begin{equation}
\frac{d\mathbf{Z}(\sigma)}{d\sigma} = \frac{\Tr\left(\mathbf{\Gamma^T S A}(\sigma)\right)\frac{d\mathbf{A}\left(\sigma\right)}{d\sigma} - \mathbf{A}\left(\sigma\right) \Tr\left(\mathbf{\Gamma^TS}\frac{d\mathbf{A}(\sigma)}{d\sigma} \right)}{\Tr\left(\mathbf{\Gamma^TSA}(\sigma)\right)^2}
\end{equation}
Using \ref{eqn:dpids} we have $\frac{d\mathbf{A}(\sigma)}{d\sigma} = -\left(\mathbf{L}^+(\sigma)\right)^2\frac{d\mathbf{L}(\sigma)}{d\sigma}\mathbf{S\Gamma} $ that we substitute above to get
\begin{equation}\label{eq:dzds}
\frac{d\mathbf{Z}(\sigma)}{d\sigma} = \frac{tr\left(\mathbf{\Gamma^T S A}(\sigma)\right)[-(\mathbf{L}^{+}\left(\sigma)\right)^2 \frac{d\mathbf{L}}{d\sigma}\mathbf{S\Gamma}]+ \mathbf{A}(\sigma)tr\left(\mathbf{\Gamma^T S} \mathbf{L}^{+}(\sigma)\right)^2 \frac{d\mathbf{L}}{d\sigma}\mathbf{S\Gamma}}{tr(\mathbf{\Gamma^T S A}(\sigma))^2}
\end{equation}

\subsection{Derivatives of \texorpdfstring{$\mathbf{E}(\mathbf{Z}(\sigma))$}{TEXT}}
The first derivative of our $L^2$  based energy function $\mathbf{E}(\mathbf{Z}(\sigma))$ with respect to $\sigma$ is given by \begin{equation}
\frac{d\mathbf{E}(\mathbf{Z}(\sigma))}{d\sigma} = 2 Tr\left[ \mathbf{SZ}(\sigma)\frac{d\mathbf{Z}}{d\sigma}^\mathbf{T}\right]
\end{equation} The second derivative of $\mathbf{E}(\mathbf{Z}(\sigma))$ is given by \begin{equation}
\frac{d^2\mathbf{E}(\mathbf{Z}(\sigma))}{d\sigma^2} = 2Tr\left [  \mathbf{S} \frac{d\mathbf{Z}}{d\sigma} \frac{d\mathbf{Z}}{d\sigma}^\mathbf{T} + \mathbf{SZ}(\sigma) \frac{d^2\mathbf{Z}(\sigma)}{d\sigma^2}^\mathbf{T} \right ]
\end{equation}

\section{Towards a proof of the main result: Asymptotic behavior of \texorpdfstring{$\mathbf{E}(\mathbf{Z}(\sigma))$}{TEXT}} In this section, we show theoretically the limits of $\mathbf{E}$ and its certain derivatives as $\sigma$ approaches $\infty$ are finite. In the next section, we will present several experimental proofs that the limits of $\mathbf{E}$ and its certain derivatives are finite also when $\sigma$ approaches zero. These two results together prove that: $\mathbf{E}$ is indeed bounded continuous function on $(0, \infty)$, and hence has a global minimum.

\begin{lemma}
The following two properties hold true for matrix $\mathbf{S}$:
\begin{enumerate}[i)]
    \item
    $\mathbf{S}^{+} = \frac{1}{n^2} \mathbf{S}$
    \item 
    $ \lim_{\sigma \to \infty} \mathbf{L}^{+}(\sigma) = \mathbf{S}^{+}$
\end{enumerate}
\end{lemma}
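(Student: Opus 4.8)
The plan is to handle the two parts separately: Part (i) reduces to a single algebraic identity, while Part (ii) rests on the fact that the Moore--Penrose inverse is continuous along a path of matrices of constant rank.

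For Part (i), I would first write $\mathbf{S} = n\mathbf{I} - \mathbf{J}$, where $\mathbf{J}$ is the all-ones matrix, and record the one identity that drives everything: since $\mathbf{J}^2 = n\mathbf{J}$, expanding gives $\mathbf{S}^2 = n^2\mathbf{I} - 2n\mathbf{J} + \mathbf{J}^2 = n(n\mathbf{I}-\mathbf{J}) = n\mathbf{S}$. This says that $\frac{1}{n}\mathbf{S}$ is idempotent, and as $\mathbf{S}$ is symmetric, $\frac{1}{n}\mathbf{S}$ is in fact an orthogonal projection. The pseudoinverse of an orthogonal projection is itself, so $\left(\frac{1}{n}\mathbf{S}\right)^+ = \frac{1}{n}\mathbf{S}$; combining this with the scaling rule $(c\mathbf{M})^+ = \frac{1}{c}\mathbf{M}^+$ at $c=n$ yields $\mathbf{S}^+ = \frac{1}{n^2}\mathbf{S}$. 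Equivalently, one may simply verify the four defining Moore--Penrose conditions for $\frac{1}{n^2}\mathbf{S}$ directly, each of which collapses after substituting powers of $\mathbf{S}$ reduced through $\mathbf{S}^2 = n\mathbf{S}$ and $\mathbf{S}^3 = n^2\mathbf{S}$. There is no real obstacle in this part.

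For Part (ii), I would begin by taking the entrywise limit of $\mathbf{L}(\sigma)$. As $\sigma \to \infty$, every kernel value $e^{-\Vert X_i - X_k\Vert^2/\sigma} \to 1$, so each off-diagonal entry tends to $-1$ and each diagonal entry tends to $\sum_{k\neq i} 1 = n-1$; hence $\lim_{\sigma\to\infty}\mathbf{L}(\sigma) = \mathbf{S}$. The naive move is to pass the limit through the pseudoinverse, but the Moore--Penrose inverse is discontinuous in general, so the heart of the argument is justifying this interchange. The structural facts I would invoke are: for every finite $\sigma$ the weights are strictly positive, so the graph is complete and connected and $\mathbf{L}(\sigma)$ is a graph Laplacian of rank exactly $n-1$ with kernel spanned by the all-ones vector $\mathbf{1}$; and the limit $\mathbf{S} = n\mathbf{I}-\mathbf{J}$ likewise has rank $n-1$ with kernel $\mathrm{span}(\mathbf{1})$, since its eigenvalues are $0$ once and $n$ with multiplicity $n-1$. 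Because the rank remains constant at $n-1$ all the way to the limit, the constant-rank continuity of the pseudoinverse (the same result cited earlier in the paper for Laplacian matrices) applies, giving $\lim_{\sigma\to\infty}\mathbf{L}^+(\sigma) = \left(\lim_{\sigma\to\infty}\mathbf{L}(\sigma)\right)^+ = \mathbf{S}^+$.

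The main obstacle is precisely this last point: one must exclude a rank drop in the limit, because otherwise the limit of the pseudoinverses need not equal the pseudoinverse of the limit. Establishing that $\mathbf{L}(\sigma)$ and $\mathbf{S}$ share both the rank $n-1$ and the common kernel $\mathrm{span}(\mathbf{1})$ is exactly what makes the constant-rank continuity theorem usable and closes the argument.
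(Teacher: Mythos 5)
Your proposal is correct, and it follows the same skeleton as the paper's proof --- the identity $\mathbf{S}^2 = n\mathbf{S}$ for part (i) and the entrywise limit $e^{-\Vert X_i - X_k\Vert^2/\sigma}\to 1$ giving $\mathbf{L}(\sigma)\to\mathbf{S}$ for part (ii) --- but it is more rigorous at exactly the two points where the paper is terse. For part (i), the paper verifies only the single condition $\mathbf{S}\,\mathbf{S}^{+}\mathbf{S}=\mathbf{S}$ and then invokes uniqueness of the Moore--Penrose inverse; strictly speaking this is incomplete, since uniqueness holds only among matrices satisfying all four Penrose conditions, and the equation $\mathbf{S}\mathbf{X}\mathbf{S}=\mathbf{S}$ alone does not pin down $\mathbf{X}$. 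Your route --- observing that $\tfrac{1}{n}\mathbf{S}$ is a symmetric idempotent, hence an orthogonal projection equal to its own pseudoinverse, and then applying the scaling rule $(c\mathbf{M})^{+}=c^{-1}\mathbf{M}^{+}$ --- closes that gap cleanly (as would your alternative of checking all four conditions). For part (ii), the paper simply writes ``therefore'' to pass the limit through the pseudoinverse; you correctly identify this interchange as the actual mathematical content, since $\mathbf{M}\mapsto\mathbf{M}^{+}$ is discontinuous where rank drops, and you justify it by noting that $\mathbf{L}(\sigma)$ (a Laplacian of a complete positively-weighted graph) and the limit $\mathbf{S}$ both have rank $n-1$ with common kernel $\mathrm{span}(\mathbf{1})$, so the constant-rank continuity theorem --- which the paper itself cites in Section 4 for a different purpose --- applies. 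In short, your proof buys completeness where the paper relies on assertion; nothing in your argument is in conflict with the paper's.
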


\begin{proof} 
\begin{enumerate}[i)]
\item It follows from using the fact $\mathbf{S^T}=\mathbf{S}$  that $$\mathbf{SS^T}=\mathbf{S^TS}=\mathbf{S}^2=n\mathbf{S}$$ This implies
$$\mathbf{S  S}^+   \mathbf{S}= \mathbf{S} (1/n)^2 \mathbf{S  S} =  (1/n)^2 \mathbf{S}^3 =  (1/n)^2 n^2 \mathbf{S} = \mathbf{S}$$
Now using the uniqueness of $\mathbf{S}^+$ from the definition of Moore-Penrose pseudoinverse we conclude that $\mathbf{S}^+ = (1/n)^2\mathbf{S}$.
\item
On substituting the simple limit $\lim_{\sigma \to \infty}e^\frac{- \Vert X_i - X_j \Vert ^2}{\sigma} = 1 $ in the definition of $\mathbf{L}$ in eqn. \ref{LEqn} we get $\lim_{\sigma \to \infty} \mathbf{L = S}$ and therefore we have $\lim_{\sigma \to \infty}\mathbf{L}^+(\sigma) = \mathbf{S}^+$ . 
\end{enumerate}
\end{proof}

\begin{lemma} 
 The following limit over the operator norm of $\mathbf{L}^+(\sigma)$ holds true: $$\lim_{\sigma \to 0}\Vert \mathbf{L}^+(\sigma) \Vert_{op} = \infty$$
\end{lemma}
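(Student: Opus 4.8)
The plan is to reduce the statement to the spectral behaviour of $\mathbf{L}(\sigma)$ as $\sigma \to 0^{+}$. Since $\mathbf{L}(\sigma)$ is symmetric and positive semidefinite, its operator norm equals its largest eigenvalue, while the operator norm of the pseudoinverse $\mathbf{L}^{+}(\sigma)$ equals the reciprocal of the smallest \emph{positive} eigenvalue of $\mathbf{L}(\sigma)$. So the entire claim follows once I show that this smallest positive eigenvalue tends to $0$.

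First I would record the key analytic fact. Because the sampled points are distinct, $\Vert X_i - X_k \Vert > 0$ for $i \neq k$, and hence each off-diagonal entry $-e^{-\Vert X_i - X_k\Vert^2/\sigma} \to 0$ as $\sigma \to 0^{+}$; the diagonal entries, being finite sums of such terms, tend to $0$ as well. Therefore every entry of $\mathbf{L}(\sigma)$ tends to $0$, which gives $\Vert \mathbf{L}(\sigma)\Vert_{op} \to 0$ (for instance via $\Vert \mathbf{L}(\sigma)\Vert_{op} \le \Vert \mathbf{L}(\sigma)\Vert_{F} \to 0$).

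Next I would invoke an elementary spectral inequality. For each fixed $\sigma > 0$ the weighted graph is connected, since all Gaussian edge weights are strictly positive, so $\mathbf{L}(\sigma)$ has rank $(n-1)$ and possesses a smallest positive eigenvalue $\lambda_{n-1}(\sigma) > 0$ satisfying $\lambda_{n-1}(\sigma) \le \Vert \mathbf{L}(\sigma)\Vert_{op}$. Taking $v$ to be a corresponding unit eigenvector gives $\mathbf{L}^{+}(\sigma)v = \lambda_{n-1}(\sigma)^{-1}v$, whence
\[
\Vert \mathbf{L}^{+}(\sigma)\Vert_{op} \;\ge\; \frac{1}{\lambda_{n-1}(\sigma)} \;\ge\; \frac{1}{\Vert \mathbf{L}(\sigma)\Vert_{op}}.
\]
Letting $\sigma \to 0^{+}$ and using $\Vert \mathbf{L}(\sigma)\Vert_{op} \to 0$ forces $\Vert \mathbf{L}^{+}(\sigma)\Vert_{op} \to \infty$, which is exactly the assertion.

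The main obstacle here is bookkeeping rather than depth: one must be certain that for \emph{every} $\sigma > 0$ the rank is genuinely $(n-1)$, so that a smallest positive eigenvalue exists and the pseudoinverse acts as a true inverse on the row space of $\mathbf{L}(\sigma)$. This rests on connectivity of the weighted graph, which holds because all the weights are positive. I would also flag the standing distinctness hypothesis $X_i \neq X_k$ for $i \neq k$: if two sampled points coincided, the corresponding off-diagonal weight would remain equal to $1$ and the claim $\mathbf{L}(\sigma) \to 0$ would break down. Under the paper's uniform-sampling model distinctness is generic, so the argument goes through.
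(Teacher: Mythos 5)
Your proof is correct and follows essentially the same route as the paper's: both arguments reduce the claim to the inequality $\Vert \mathbf{L}^+(\sigma)\Vert_{op} \ge \Vert \mathbf{L}(\sigma)\Vert_{op}^{-1}$ combined with the fact that $\Vert \mathbf{L}(\sigma)\Vert_{op} \to 0$ as $\sigma \to 0^+$. The only cosmetic differences are that you obtain the inequality spectrally (the smallest positive eigenvalue is at most the largest one), whereas the paper derives it from the identity $\mathbf{L} = \mathbf{L}\mathbf{L}^+\mathbf{L}$ and submultiplicativity of the operator norm, and that the paper quantifies the decay explicitly as $\Vert \mathbf{L}(\sigma)\Vert_{op} \le A_2 e^{-r_2/\sigma}$ with $r_2 = \min_{k\neq l}\Vert \mathbf{X}_k - \mathbf{X}_l\Vert^2$, while you argue via entrywise convergence and the Frobenius norm --- both of which rest, as you correctly flag, on the distinctness of the sampled points.
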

\begin{proof}
\begin{align}
\Vert \mathbf{L}(\sigma)_{op}&= \Vert \mathbf{L}(\sigma) \mathbf{L}^+(\sigma) \mathbf{L}(\sigma) \Vert_{op}\\ \nonumber
&\leq \Vert \mathbf{L}(\sigma) \Vert_{op}\Vert \mathbf{L}^+(\sigma) \Vert_{op} \Vert \mathbf{L}(\sigma) \Vert_{op}\\ \nonumber
&= \Vert \mathbf{L}(\sigma) \Vert_{op}^{2} \Vert \mathbf{L}^{+}(\sigma) \Vert_{op}\\ \nonumber
\end{align}
Therefore, $$ \Vert \mathbf{L}^{+}(\sigma) \Vert_{op} \geq \left ( \Vert \mathbf{L}(\sigma) \Vert_{op}\right)^{-1}$$
For $r_2 = \min_{k \neq l} \Vert \mathbf{X}_k - \mathbf{X}_l\Vert^2 , \mathbf{A}_2 > 0$ upon substituting $\Vert \mathbf{L}(\sigma) \Vert_{op} \leq \mathbf{A}_2 e^{\frac{-r_2}{\sigma}} $  above we get
\begin{equation}
\Vert \mathbf{L}^+(\sigma)\Vert_{op} \geq \frac{1}{\mathbf{A}_2}e^\frac{r^2}{\sigma}
\end{equation}

On computing limit of this inequality we get $\lim_{\sigma \to 0}\Vert \mathbf{L}^+(\sigma) \Vert_{op} = \infty$.
\end{proof}
\begin{theorem} 
The following limits hold true for manifold learning map $\mathbf{Z}(\sigma)$:

\begin{enumerate}[i)]

\item
When $n \to \infty$ \text{we have} $\lim_{\sigma \to \infty} \mathbf{Z}(\sigma) =   \frac{\nu \mathbf{\Gamma}}{Tr(\mathbf{\Gamma^T S \Gamma})}$

\item 
$\lim_{\sigma \to \infty} \frac{d\mathbf{Z}}{d\sigma} = 0$

\item
$\lim_{\sigma \to \infty} \frac{d^2\mathbf{Z}}{d\sigma^2} = 0$

\end{enumerate}

\end{theorem}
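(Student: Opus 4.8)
The plan is to derive all three limits from two facts already in hand: first, that $\lim_{\sigma\to\infty}\mathbf{L}^+(\sigma)=\mathbf{S}^+$ (Lemma~2(ii)) together with the algebraic identities $\mathbf{S}^+=\tfrac{1}{n^2}\mathbf{S}$ (Lemma~2(i)) and $\mathbf{S}^2=n\mathbf{S}$; and second, that the $\sigma$-derivatives of the entries of $\mathbf{L}(\sigma)$ decay to zero as $\sigma\to\infty$. Because matrix multiplication, the trace, and the map $\mathbf{L}^+\mapsto(\mathbf{L}^+)^2$ are all continuous, and because $\mathbf{L}(\sigma)$, being a connected-graph Laplacian, has constant rank $n-1$ for every finite $\sigma$ as well as in the limit $\mathbf{L}(\sigma)\to\mathbf{S}$ (so that the pseudoinverse and the derivative formula of Lemma~1 remain valid), I can pass each limit through the defining expressions for $\mathbf{A}(\sigma)$, $\mathbf{Z}(\sigma)$ and their derivatives term by term.

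For part (i), I would first fix $n$ and let $\sigma\to\infty$. Substituting $\mathbf{L}^+(\sigma)\to\mathbf{S}^+=\tfrac{1}{n^2}\mathbf{S}$ into $\mathbf{A}(\sigma)=\mathbf{L}^+(\sigma)\mathbf{S}\mathbf{\Gamma}$ and using $\mathbf{S}^2=n\mathbf{S}$ gives $\mathbf{A}(\sigma)\to\tfrac1n\mathbf{S}\mathbf{\Gamma}$. The denominator then satisfies $\Tr\!\big(\mathbf{\Gamma}^{T}\mathbf{S}\mathbf{A}(\sigma)\big)\to\tfrac1n\Tr\!\big(\mathbf{\Gamma}^{T}\mathbf{S}^2\mathbf{\Gamma}\big)=\Tr\!\big(\mathbf{\Gamma}^{T}\mathbf{S}\mathbf{\Gamma}\big)$, which is strictly positive because $\mathbf{S}=n\mathbf{I}-\mathbf{J}$, with $\mathbf{J}$ the all-ones matrix, is positive semidefinite with kernel spanned by the constant vector and $\mathbf{\Gamma}$ has distinct (hence generically non-constant) columns. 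Dividing yields $\lim_{\sigma\to\infty}\mathbf{Z}(\sigma)=\dfrac{\mathbf{S}\mathbf{\Gamma}}{n\,\Tr(\mathbf{\Gamma}^{T}\mathbf{S}\mathbf{\Gamma})}$. Writing $\mathbf{S}\mathbf{\Gamma}=n\mathbf{\Gamma}-\mathbf{J}\mathbf{\Gamma}$, the remaining step is the $n\to\infty$ limit: the $(i,c)$ entry of $\tfrac1n\mathbf{J}\mathbf{\Gamma}$ is the column average $\tfrac1n\sum_{j}\mathbf{\Gamma}_{jc}$, which tends to the common mean of the i.i.d.\ entries (zero, for centred normals) by the law of large numbers, so $\tfrac1n\mathbf{S}\mathbf{\Gamma}\to\mathbf{\Gamma}$ and the limit collapses to $\dfrac{\nu\mathbf{\Gamma}}{\Tr(\mathbf{\Gamma}^{T}\mathbf{S}\mathbf{\Gamma})}$ with $\nu$ the resulting scalar constant.

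For parts (ii) and (iii), the engine is that differentiating $\mathbf{L}(\sigma)_{ik}$ produces a factor $\sigma^{-2}\Vert\mathbf{X}_i-\mathbf{X}_k\Vert^2\,e^{-\Vert\mathbf{X}_i-\mathbf{X}_k\Vert^2/\sigma}$, and differentiating once more produces a factor that is $O(\sigma^{-3})$; both tend to $0$ as $\sigma\to\infty$, so $\tfrac{d\mathbf{L}}{d\sigma}\to\mathbf{0}$ and $\tfrac{d^2\mathbf{L}}{d\sigma^2}\to\mathbf{0}$ entrywise. I would then inspect the expression (\ref{eq:dzds}) for $\tfrac{d\mathbf{Z}}{d\sigma}$ and observe that every term of its numerator carries a factor of $\tfrac{d\mathbf{L}}{d\sigma}$, while the surviving factors $\mathbf{L}^+(\sigma)$, $\mathbf{A}(\sigma)$ and the traces all converge to the finite limits computed in part (i); since the denominator converges to $\big[\Tr(\mathbf{\Gamma}^{T}\mathbf{S}\mathbf{\Gamma})\big]^2\neq0$, the whole quotient tends to $0$. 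For (iii) I differentiate (\ref{eq:dzds}) once more; by the product rule every resulting term contains at least one factor among $\tfrac{d\mathbf{L}}{d\sigma}$ or $\tfrac{d^2\mathbf{L}}{d\sigma^2}$, each of which vanishes in the limit while the remaining bounded factors converge, giving $\tfrac{d^2\mathbf{Z}}{d\sigma^2}\to 0$.

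The steps in (ii) and (iii) are essentially bookkeeping once the entrywise decay of the derivatives of $\mathbf{L}$ is established. I expect the genuine obstacle to lie in part (i), specifically in justifying the interchange of the two limits $\sigma\to\infty$ and $n\to\infty$: for fixed $n$ the $\sigma$-limit is purely $\frac{\mathbf{S}\mathbf{\Gamma}}{n\,\Tr(\mathbf{\Gamma}^{T}\mathbf{S}\mathbf{\Gamma})}$, and only the subsequent $n\to\infty$ averaging turns $\tfrac1n\mathbf{S}\mathbf{\Gamma}$ into $\mathbf{\Gamma}$, so the statement should be read as an iterated limit. Making this rigorous requires the law-of-large-numbers convergence of the column means of $\mathbf{\Gamma}$, which holds almost surely, and a uniform control keeping the denominator $\Tr(\mathbf{\Gamma}^{T}\mathbf{S}\mathbf{\Gamma})$ away from zero as $n$ grows; pinning down the exact constant $\nu$ and the precise mode of convergence (almost sure versus in probability) is the most delicate point.
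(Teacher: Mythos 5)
Your proposal is correct and takes essentially the same route as the paper: part (i) substitutes $\lim_{\sigma\to\infty}\mathbf{L}^+(\sigma)=\mathbf{S}^+=\tfrac{1}{n^2}\mathbf{S}$ and $\mathbf{S}^2=n\mathbf{S}$ into $\mathbf{Z}(\sigma)$ and then sends $n\to\infty$, while parts (ii)--(iii) follow from the entrywise decay of $\tfrac{d\mathbf{L}}{d\sigma}$ (and $\tfrac{d^2\mathbf{L}}{d\sigma^2}$) together with the nonvanishing limit of the denominator $\Tr\left(\mathbf{\Gamma^T S A}(\sigma)\right)^2$ in the quotient rule, exactly as in the paper's treatment via $\mathbf{N}(\sigma)$ and $\mathbf{D}(\sigma)$. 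The only differences are refinements on your side: you justify the final step $\tfrac{1}{n}\mathbf{S}\mathbf{\Gamma}\to\mathbf{\Gamma}$ by the law of large numbers on the i.i.d.\ entries of $\mathbf{\Gamma}$ (the paper instead asserts $\mathbf{S}/n\to\mathbf{I}$ entrywise), you correctly flag the statement as an iterated limit ($\sigma\to\infty$ first, then $n\to\infty$), and your bookkeeping avoids the paper's minor slips, such as its claim that $\lim_{\sigma\to\infty}\mathbf{A}(\sigma)=\mathbf{0}$ in part (iii), where $\lim_{\sigma\to\infty}\tfrac{d\mathbf{A}(\sigma)}{d\sigma}=\mathbf{0}$ is what is meant and what your argument uses.
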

\begin{proof}
\begin{enumerate}[i)]
\item
\begin{equation}
    \mathbf{Z}(\sigma) = \frac{\nu \mathbf{L}^{+}(\sigma)\mathbf{S \Gamma}}{Tr(\mathbf{\Gamma^T S L}^{+}(\sigma))\mathbf{S\Gamma}}
\end{equation}
Upon substituting $\lim_{\sigma \to \infty} \mathbf{L}^+ = \mathbf{S}^+$ and  $\mathbf{S}^+ = \frac{1}{n^2}\mathbf{S}$ in above expression we get
\begin{align}
\lim_{\sigma \to \infty} \mathbf{Z}(\sigma)&=\frac{\nu \mathbf{S}^{+}(\sigma)\mathbf{S\Gamma}}{Tr(\mathbf{\Gamma^TSS}^{+}(\sigma)\mathbf{S\Gamma})}\\
      &= \frac { (\nu/n^2) \mathbf{S}^2 \mathbf{\Gamma}} {(1/n^2)Tr (\mathbf{\Gamma^T S}^3 \mathbf{\Gamma}) } \nonumber\\ 
\end{align}
We substitute $\mathbf{S}^2 = n\mathbf{S}$ in the above expression and as well using the same we have $$\mathbf{S}^3 = \mathbf{S}^2 \mathbf{S} = n\mathbf{S}^2 = n^2 \mathbf{S}$$ which we also substitute above 
\begin{align}\label{ZLimFinal}
&=\frac{\nu (\mathbf{S}/n)\mathbf{\Gamma}}{ Tr(\mathbf{\Gamma^TS\Gamma})} 
\end{align}
As $n \to \infty$, we have $\frac{\mathbf{S}}{n} = \mathbf{I}$ which follows directly from the definition of $\mathbf{S}$ as $\mathbf{S}/n = \begin{bmatrix} 1-\frac{1}{n} & \frac{-1}{n} \\ \frac{-1}{n} & 1-\frac{1}{n}  \end{bmatrix}$. We substitute this in \ref{ZLimFinal} to obtain the required limit.

\item
We have $\lim_{\sigma \to \infty} \mathbf{L}^+(\sigma) = \mathbf{S}^+$ and therefore we have
\begin{equation}
\lim_{\sigma \to \infty} \mathbf{A}(\sigma) = \mathbf{L}^{+}\mathbf{S\Gamma} = \mathbf{\Gamma} \end{equation}
We substitute this in limit of eqn. \ref{eq:dzds} of $\frac{d\mathbf{Z}(\sigma)}{d\sigma}$ to get

\begin{equation}\label{dzdss}
\lim_{\sigma \to \infty}\frac{d\mathbf{Z}(\sigma)}{d\sigma} = \lim_{\sigma \to \infty}\frac{tr(\mathbf{\Gamma^T S \Gamma})[-(\mathbf{S}^{+})^2 \frac{d\mathbf{L}}{d\sigma}\mathbf{S\Gamma}]+\mathbf{\Gamma} tr(\mathbf{\Gamma^T S} (\mathbf{S}^{+})^2 \frac{d\mathbf{L}}{d\sigma}\mathbf{S\Gamma})}{tr(\mathbf{\Gamma^T S \Gamma})^2}
\end{equation}
the derivative of the off-diagonal term $\mathbf{L}_{ij}(\sigma)$ is \begin{equation}\frac{d\mathbf{L}_{ij}(\sigma)}{d\sigma}=\frac{ -\Vert X_i - X_j \Vert ^2 }{\sigma ^ 2   e^{-\Vert X_i - X_j \Vert ^2) / \sigma}  } \end{equation} and the derivative of the diagonal term $\mathbf{L}_{ii}(\sigma)$ is \begin{equation}\frac{d\mathbf{L}_{ii}(\sigma)}{d\sigma}=\sum_{j \neq i} \left[\frac{ -\Vert X_i - X_j \Vert ^2 }{\sigma ^ 2   e^{-\Vert X_i - X_j \Vert ^2) / \sigma}  }   \right]\end{equation} From this we have \begin{equation}\label{dldszero}
\lim_{\sigma \to \infty}=\frac{d\mathbf{L}(\sigma)}{d\sigma} = 0
\end{equation} 
We substitute this in equation \ref{dzdss} to get our required result $$\lim_{\sigma \to \infty} \frac{d\mathbf{Z}(\sigma)}{d\sigma} = 0$$

\item
We refer to the numerator of the expression of $\frac{d\mathbf{Z}(\sigma)}{d\sigma}$ with $\mathbf{N}(\sigma)$ as $$\mathbf{N}(\sigma)= \Tr\left(\mathbf{\Gamma^T S A}(\sigma)\right)\frac{d\mathbf{A}\left(\sigma\right)}{d\sigma} - \mathbf{A}\left(\sigma\right) \Tr\left(\mathbf{\Gamma^TS}\frac{d\mathbf{A}(\sigma)}{d\sigma} \right)$$
and we refer to the denominator of the expression of $\frac{d\mathbf{Z}(\sigma)}{d\sigma}$ with $\mathbf{D}(\sigma)$ as $$\mathbf{D}(\sigma) = \Tr\left(\mathbf{\Gamma^TSA}(\sigma)\right)^2$$
We have 
\begin{equation}\label{nonzero}
\lim_{\sigma \to \infty}\mathbf{D}(\sigma) = \Tr(\mathbf{\Gamma^{T} S} \frac{1}{n^2}\mathbf{S}^2\Gamma) \neq \mathbf{0}\end{equation} 
The derivative of $\mathbf{D}(\sigma)$ is given below
 $$\frac{d\mathbf{D}(\sigma)}{d\sigma}=2\Tr\left(\mathbf{\Gamma^{T}SA}(\sigma)\right)\Tr\left(\mathbf{\Gamma^{T}S}\frac{d\mathbf{A}(\sigma)}{d\sigma}\right)
 $$
From equation \ref{dldszero}, we have $\lim_{\sigma \to \infty}\frac{d\mathbf{L}(\sigma)}{d\sigma} = 0$ and therefore $\lim_{\sigma \to \infty}\mathbf{A}(\sigma)=0$. Based on these limits and the equation above we have \begin{equation}\label{zero1}
\lim_{\sigma \to \infty} \frac{d\mathbf{D}(\sigma)}{d\sigma}=0\end{equation} The expression for $\frac{d\mathbf{N}(\sigma)}{d\sigma}$ is a sum of terms containg $\frac{d\mathbf{A}(\sigma)}{d\sigma}$, $\Tr\left(\mathbf{\Gamma^TS}\frac{d\mathbf{A}(\sigma)}{d\sigma}\right)$ and $\frac{d^2\left(\mathbf{A}(\sigma)\right)}{d\sigma^2}$, each of which tend to $0$ as $\sigma \to \infty$ and therefore \begin{equation} \label{zero2} \lim_{\sigma \to \infty}\frac{d\mathbf{N}(\sigma)}{d\sigma} = 0 \end{equation}
We have \begin{equation}
\frac{d^2\mathbf{Z}(\sigma)}{d\sigma^2} = \frac{\mathbf{D}(\sigma)\frac{d\mathbf{N}(\sigma)}{d\sigma} - \mathbf{N}(\sigma)\frac{d\mathbf{D}(\sigma)}{d\sigma}}{\mathbf{D}^2(\sigma)}
\end{equation}
On substituting \ref{nonzero}, \ref{zero1} and \ref{zero2} above we prove $\lim_{\sigma \to \infty}\frac{d^2\mathbf{Z}(\sigma)}{d\sigma^2} = 0$.
\end{enumerate}
\end{proof}

\begin{lemma} The following limits hold true for energy $\mathbf{E}(\mathbf{Z}(\sigma))$:
\begin{enumerate}[i)]
\item $\lim_{\sigma \to \infty} \mathbf{E}(\sigma)$ is finite.
\item $
\lim_{\sigma \to \infty} \frac{d\mathbf{E}(\mathbf{Z}(\sigma))}{d\sigma} = 0
$
\item
$\lim_{\sigma \to \infty} \frac{d^2\mathbf{E}(\mathbf{Z}(\sigma))}{d\sigma^2} = 0
$

\end{enumerate}
\end{lemma}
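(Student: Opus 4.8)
The plan is to reduce all three statements to the limits for $\mathbf{Z}(\sigma)$, $\tfrac{d\mathbf{Z}}{d\sigma}$ and $\tfrac{d^2\mathbf{Z}}{d\sigma^2}$ already established in the preceding theorem, together with the elementary fact that the trace and matrix multiplication are continuous (indeed polynomial) operations on finite-dimensional matrices, so that limits may be passed freely through them. Thus nothing new about the kernel or the pseudoinverse is needed; everything follows by substituting the three matrix limits into the compact expressions for $\mathbf{E}$ and its derivatives.

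For (i) I would begin from the compact form $\mathbf{E}(\sigma)=\Tr(\mathbf{Z}^{T}(\sigma)\mathbf{S}\mathbf{Z}(\sigma))$. Since $\mathbf{S}$ is a fixed matrix and $\mathbf{Z}(\sigma)$ converges as $\sigma\to\infty$ to the finite matrix obtained in part (i) of the theorem, the product $\mathbf{Z}^{T}(\sigma)\mathbf{S}\mathbf{Z}(\sigma)$ converges entrywise to a finite matrix, and hence its trace converges to a finite number. Writing $\mathbf{Z}_\infty$ for the limiting matrix, the limit is $\Tr(\mathbf{Z}_\infty^{T}\mathbf{S}\mathbf{Z}_\infty)$, which is well defined and finite precisely because the normalizing scalar $\Tr(\mathbf{\Gamma}^{T}\mathbf{S}\mathbf{\Gamma})$ appearing in $\mathbf{Z}_\infty$ is nonzero.

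For (ii) and (iii) I would substitute the same limits into the derivative expressions derived above. In the first derivative $\tfrac{d\mathbf{E}}{d\sigma}=2\Tr[\mathbf{S}\mathbf{Z}(\sigma)(\tfrac{d\mathbf{Z}}{d\sigma})^{T}]$ the factor $\mathbf{Z}(\sigma)$ stays bounded while $\tfrac{d\mathbf{Z}}{d\sigma}\to 0$ by part (ii) of the theorem, so the argument of the trace tends to the zero matrix and its trace tends to $0$. Likewise, in $\tfrac{d^2\mathbf{E}}{d\sigma^2}=2\Tr[\mathbf{S}\tfrac{d\mathbf{Z}}{d\sigma}(\tfrac{d\mathbf{Z}}{d\sigma})^{T}+\mathbf{S}\mathbf{Z}(\sigma)(\tfrac{d^2\mathbf{Z}}{d\sigma^2})^{T}]$ the first summand vanishes because $\tfrac{d\mathbf{Z}}{d\sigma}\to 0$ forces $\tfrac{d\mathbf{Z}}{d\sigma}(\tfrac{d\mathbf{Z}}{d\sigma})^{T}\to 0$, and the second vanishes because $\mathbf{Z}(\sigma)$ is bounded while $\tfrac{d^2\mathbf{Z}}{d\sigma^2}\to 0$ by part (iii). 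Continuity of the trace then yields the two stated limits of $0$.

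The computations are routine once the theorem is in hand, so I do not expect a serious obstacle; the one point that genuinely needs care is the nondegeneracy used in (i), namely that $\Tr(\mathbf{\Gamma}^{T}\mathbf{S}\mathbf{\Gamma})\neq 0$. This follows from the standing assumption that the rows of $\mathbf{\Gamma}$ are distinct, since $\mathbf{\Gamma}^{T}\mathbf{S}\mathbf{\Gamma}$ is positive semidefinite and its trace equals $n\sum_i\|\mathbf{\Gamma}_i\|^2-\|\sum_i\mathbf{\Gamma}_i\|^2$, which is strictly positive unless all rows coincide. A secondary bookkeeping point is that part (i) of the theorem was phrased in the regime $n\to\infty$; for the present lemma only boundedness of the limiting $\mathbf{Z}(\sigma)$ is required, and this is already furnished at fixed $n$ by the finite matrix $\tfrac{\nu(\mathbf{S}/n)\mathbf{\Gamma}}{\Tr(\mathbf{\Gamma}^{T}\mathbf{S}\mathbf{\Gamma})}$ obtained in equation \ref{ZLimFinal} before that limit is taken.
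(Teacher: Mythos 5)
Your proposal follows essentially the same route as the paper's own proof: both substitute the limits of $\mathbf{Z}(\sigma)$, $\tfrac{d\mathbf{Z}}{d\sigma}$, and $\tfrac{d^2\mathbf{Z}}{d\sigma^2}$ from the preceding theorem into the trace expressions $\mathbf{E}=\Tr(\mathbf{Z}^{T}\mathbf{S}\mathbf{Z})$, $\tfrac{d\mathbf{E}}{d\sigma}=2\Tr\bigl[\mathbf{S}\mathbf{Z}(\tfrac{d\mathbf{Z}}{d\sigma})^{T}\bigr]$, and $\tfrac{d^2\mathbf{E}}{d\sigma^2}=2\Tr\bigl[\mathbf{S}\tfrac{d\mathbf{Z}}{d\sigma}(\tfrac{d\mathbf{Z}}{d\sigma})^{T}+\mathbf{S}\mathbf{Z}(\tfrac{d^2\mathbf{Z}}{d\sigma^2})^{T}\bigr]$ and pass the limit through the (continuous) trace and matrix products. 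Your two added points of care --- verifying $\Tr(\mathbf{\Gamma}^{T}\mathbf{S}\mathbf{\Gamma})\neq 0$ and noting that only the fixed-$n$ limit $\tfrac{\nu(\mathbf{S}/n)\mathbf{\Gamma}}{\Tr(\mathbf{\Gamma}^{T}\mathbf{S}\mathbf{\Gamma})}$ is needed rather than the $n\to\infty$ statement --- are refinements the paper omits (its proof is a one-line substitution for each part, with typos writing $\sigma\to 0$ where $\sigma\to\infty$ is meant), so your version is, if anything, more complete.
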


\begin{proof}
\begin{enumerate}[i)]
\item This is immediate from the proof of Theorem 1, that expresses a limit for $\mathbf{Z}(\sigma)$ as $\sigma \to \infty$, and the expression of $\mathbf{E}(\sigma)$ in section 5.
\item The first derivative of $\mathbf{E}(\mathbf{Z}(\sigma))$ is given by \begin{equation}
\frac{d\mathbf{E}(\sigma)}{d\sigma} = 2 Tr\left[ \mathbf{SZ}(\sigma)\frac{d\mathbf{Z}}{d\sigma}^\mathbf{T}\right]
\end{equation}

As $\lim_{\sigma \to 0} \frac{d\mathbf{Z}}{d\sigma} = 0$ and $\lim_{\sigma \to \infty} \mathbf{Z}(\sigma) =   \frac{\nu \mathbf{\Gamma}}{Tr(\mathbf{\Gamma^T S \Gamma})}$ we prove the result.

\item The second derivative of $\mathbf{E}(\mathbf{Z}(\sigma))$ is given by \begin{equation}
\frac{d^2\mathbf{E}(\sigma)}{d\sigma^2} = 2Tr\left [ \mathbf{ S} \frac{d\mathbf{Z}}{d\sigma} \frac{d\mathbf{Z}}{d\sigma}^T + \mathbf{SZ}(\sigma) \frac{d^2\mathbf{Z}(\sigma)}{d\sigma^2}^\mathbf{T} \right ]
\end{equation} 
As $\lim_{\sigma \to 0} \frac{d\mathbf{Z}}{d\sigma} = 0$ and  $\lim_{\sigma \to 0} \frac{d^2\mathbf{Z}}{d\sigma} = 0$ and $\lim_{\sigma \to \infty} \mathbf{Z}(\sigma) =   \frac{\nu \mathbf{\Gamma}}{Tr(\mathbf{\Gamma^T S \Gamma})}$ we prove the result.
\end{enumerate}
\end{proof}
Setting aside the experimental part of the proof for section 8, we write down two algorithms, one to compute $\sigma_{\ast} = \underset{\sigma}{\mathrm{ arg \enskip min }}\enspace\mathbf{E}(\sigma)$ for a fixed dataset $\mathbf{X}$, and the next one is to test the existence of a topological circularity in $\mathbf{X}$. 
\begin{algorithm}[ht]
\caption{Bandwidth Estimation Algorithm}\label{euclid}
\begin{algorithmic}[1]
\Procedure{Bandwidth Estimation}{}
\State $\text{Construct }
 \mathbf{L(\sigma)} \text{ as }
\mathbf{L}(\mathbf{X},\sigma)_{ij}=\left\{
\begin{matrix} 
\sum_{j \neq i, j=1}^{n}e^{\frac{-\Vert X_i - X_j \Vert ^2}{ \sigma}} & \mbox{if}\ i = j \\
-e^{\frac{-\Vert X_i - X_j \Vert ^2}{ \sigma}} & \mbox{if}\ i \neq j
\end{matrix}
\right\} $
\State $ \textit{Minimization Phase:} \text{Apply gradient descent to find a local minima }\sigma_{l} \text{ of }$  $$\mathbf{E}(\mathbf{Z}(\sigma)) = Tr(\mathbf{Z}(\sigma)\mathbf{S}\mathbf{Z^T}(\sigma))$$ 
\enskip using its derivative in equation 17, which further uses the derivatives in equation 16, \enskip 27 and 28.
\State $\textit{Tunneling Phase: }$ Minimize $  \frac{\mathbf{E}(\mathbf{Z}(\sigma)) - \mathbf{E}(\mathbf{Z}(\sigma_l))}{(\sigma - \sigma_l)^\lambda}$, \text{ where }$\lambda$ \protect{\text{ is chosen as in \protect{\cite{tunneling3}}}} to get $\sigma_{\ast}$
\State Return $\sigma_{\ast}$
\EndProcedure
\end{algorithmic}
\end{algorithm}

\begin{algorithm}
\caption{Hypothesis testing for topological circularity}\label{euclidTwo}
\begin{algorithmic}[1]
\Statex \text{Hypothesis }$H_0$\text{: There is a circular structure in } $\mathbf{X}$.
\If{$\sigma_{\ast} = \underset{\sigma}{\mathrm{ arg \enskip min }}\enspace\mathbf{E}(\sigma)$ \text{ does not exist}} 
\State reject  $H_0$
\EndIf
\If{$\sigma_{\ast} = \underset{\sigma}{\mathrm{ arg \enskip min }}\enspace\mathbf{E}(\sigma)$ \text{ exists}}
\State plot $\mathbf{Z}(\sigma_{\ast})$, and join its points in the order of $n$ points in $\mathbf{X}$.
\EndIf
 \If{ $\mathbf{Z}(\sigma_{\ast})$ does not form not a non-self-intersecting polygon} 
 \State reject $H_0$ 
\EndIf
\State \mbox{else}
\If{ $\mathbf{Z}(\sigma_{\ast})$ forms a non-self-intersecting polygon}
\State accept $H_0$
\EndIf
\end{algorithmic}
\end{algorithm}

\section{ Experiments and conclusion of proof of the main result:}

As hinted back in section 7, this will be the concluding section for a proof of the main result where several experiments will guarantee that the minimizer of E corresponds to a non-self intersecting polygon Z(sigma) in 2D.
\subsection{Experiments with synthetic data:}
In this subsection we now describe two experiments that we have conducted with our proposed $\mathbf{E}(\mathbf{Z}(\sigma))$ functions on synthetic data lying on a 
\begin{enumerate}[a)]
\item Circle
\item Toroidal helix
\end{enumerate}

\begin{figure}[!ht] 
  \begin{subfigure}[b]{0.5\linewidth}
    \centering
    \includegraphics[width=0.75\linewidth]{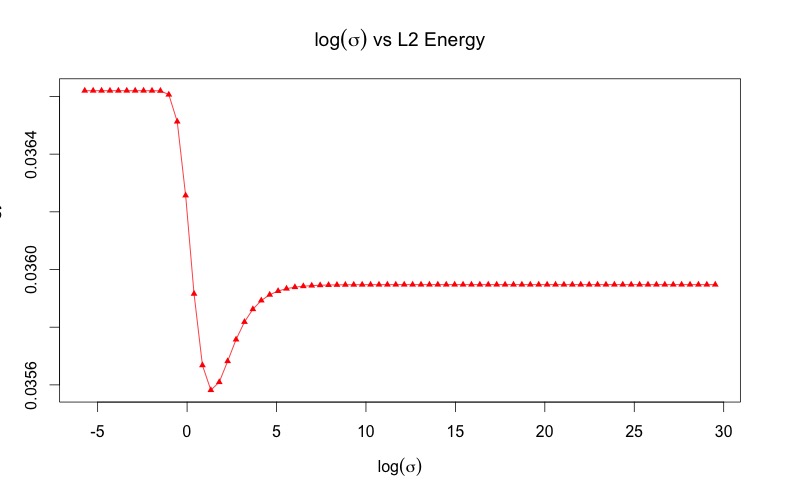} 
    \caption{$\protect X = \text{circle }, log(\sigma)$ Vs. $\protect L^2$ Energy,\\ $\protect n=4$} 
    \label{fig7:a} 
    \vspace{4ex}
  \end{subfigure}
  \begin{subfigure}[b]{0.5\linewidth}
    \centering
    \includegraphics[width=0.75\linewidth]{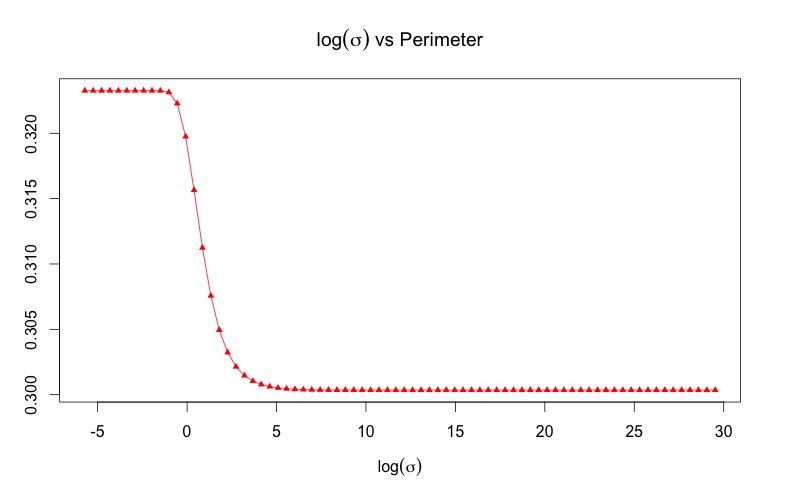} 
    \caption{$\protect X = \text{circle }, log(\sigma)$ Vs. $\protect L^1$ Perimeter,\\ \centering $\protect n=4$} 
    \label{fig7:b} 
    \vspace{4ex}
  \end{subfigure} 
  \begin{subfigure}[b]{0.5\linewidth}
    \centering
    \includegraphics[width=0.75\linewidth]{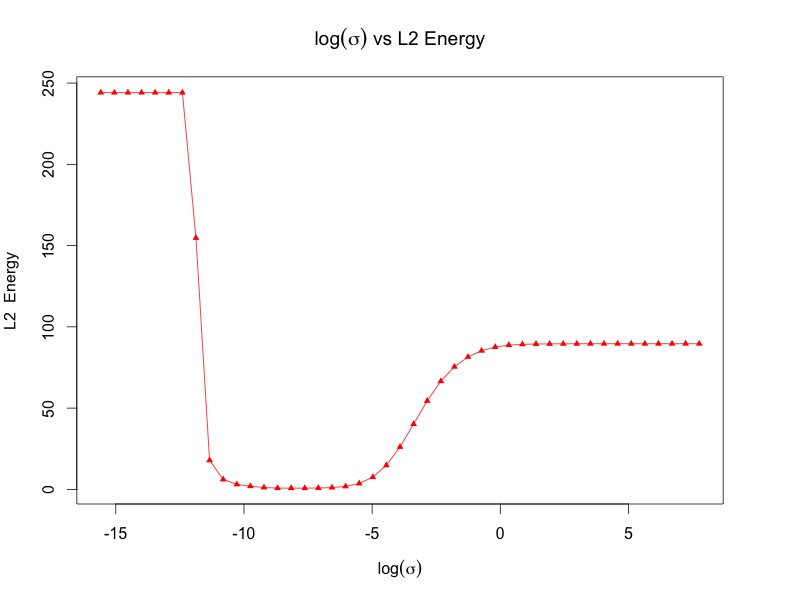} 
    \caption{$\protect X = \text{circle }, log(\sigma)$ Vs. $\protect L^2$ Energy,\\ $\protect n=750$} 
    \label{fig7:c} 
  \end{subfigure}
  \begin{subfigure}[b]{0.5\linewidth}
    \centering
    \includegraphics[width=0.75\linewidth]{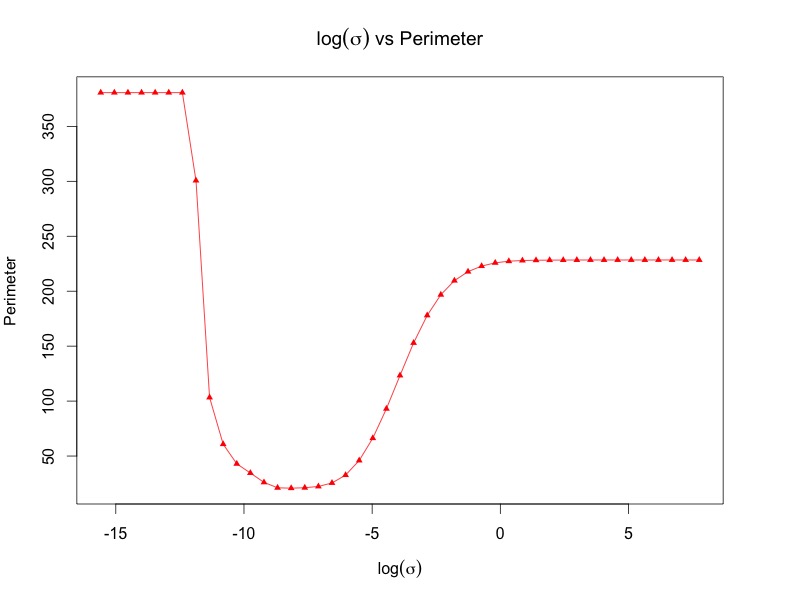} 
    \caption{$\protect X = \text{circle },log(\sigma)$ Vs. $\protect L^1$ Perimeter, \\\centering $\protect n=750$} 
    \label{fig7:d} 
  \end{subfigure} 
  \caption{\protect{$\protect log(\sigma)$ Vs. $\protect L^1$ and $\protect L^2$} energy and perimeter of fast \protect{\cite{FastSDD}} manifold learning on unit-circle.}
  \label{fig7} 
\end{figure}

\begin{figure}[!ht]   
  \begin{subfigure}[b]{0.5\linewidth}
    \centering
    \includegraphics[width=0.85\linewidth]{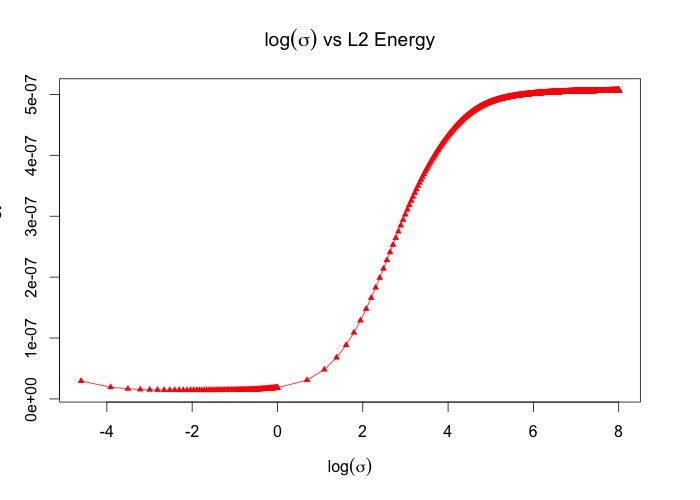} 
    \caption{$\protect X = \text{toroidal helix},log(\sigma)$ Vs. $\protect L^2$ Energy, \\\centering $\protect n=750$} 
    \label{fig8:a} 
  \end{subfigure}
  \begin{subfigure}[b]{0.5\linewidth}
    \centering
    \includegraphics[width=0.85\linewidth]{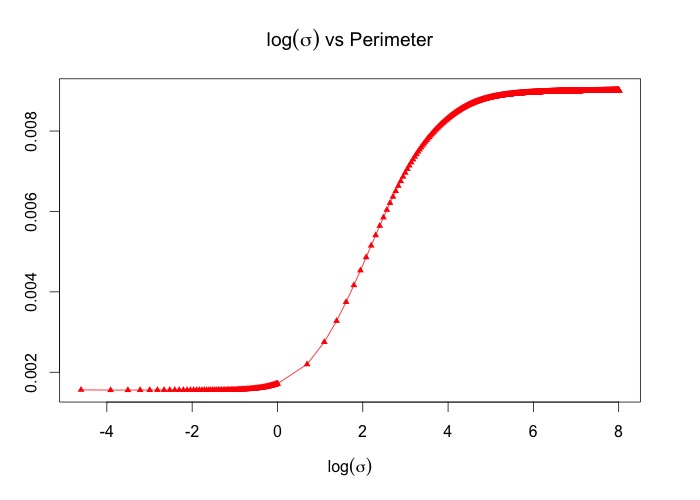} 
    \caption{$\protect X = \text{toroidal helix}, log(\sigma)$ Vs. $\protect L^1$ Perimeter, \centering$\protect n=750$} 
    \label{fig8:b} 
  \end{subfigure} 
  \caption{$\protect log(\sigma)$ Vs. $\protect L^1$ and $\protect L^2$ energy and perimeter of fast \protect{\cite{FastSDD}} manifold learning on a toroidal helix.}
  \label{fig8} 
\end{figure}
\begin{figure}
\begin{multicols}{3}
    \includegraphics[width=\linewidth]{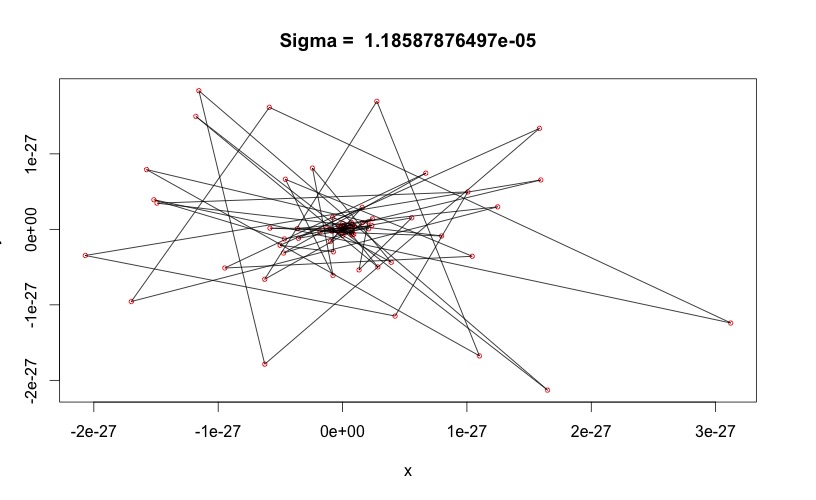}\par \caption{$\protect \sigma = 1.18 \times 10^{-5}$(Small $\protect \sigma$)}
    \includegraphics[width=\linewidth]{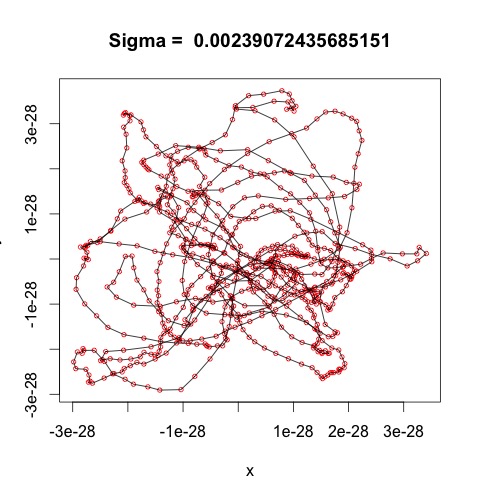}\par\caption{ $\protect \sigma = 0.00239$ (Small $\protect\sigma$)}
    \includegraphics[width=\linewidth]{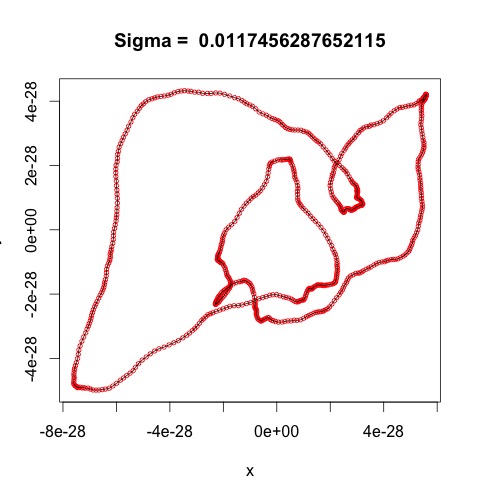}\par\caption{$\protect \sigma = 0.01174$ (Small $\protect\sigma$)}
\end{multicols}
\begin{multicols}{3}
    \includegraphics[width=\linewidth]{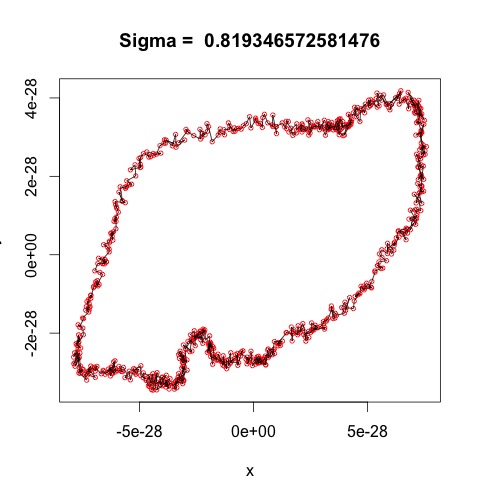}\par\caption{$\protect \sigma = 0.8193$ ($\protect\sigma$ around $\sigma_*$)}
    \includegraphics[width=\linewidth]{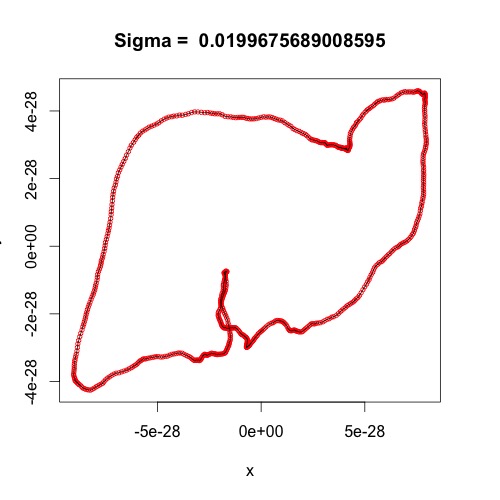}\par\caption{$\protect\sigma = 0.0199$ ($\protect\sigma$ around $\protect\sigma_*$)}
    \includegraphics[width=\linewidth]{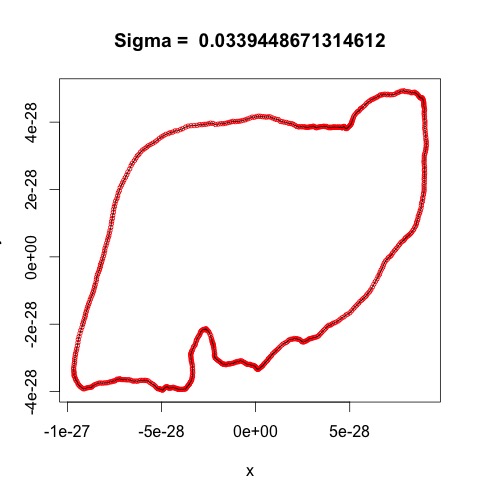}\par\caption{$\protect\sigma = 0.0339$ ($\protect\sigma$ around $\protect\sigma_*$)}
\end{multicols}
\begin{multicols}{3}
    \includegraphics[width=\linewidth]{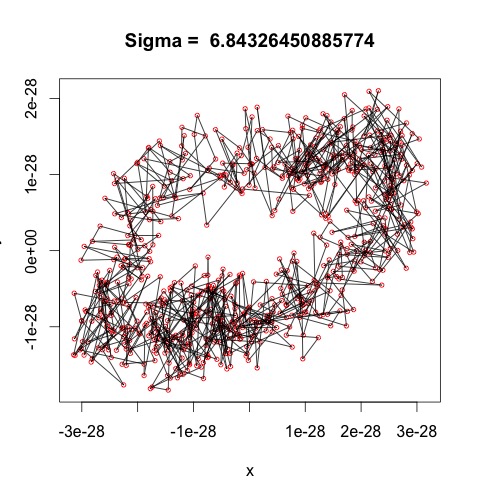}\par\caption{$\protect\sigma = 6.8432$ (large $\protect\sigma$)}
    \includegraphics[width=\linewidth]{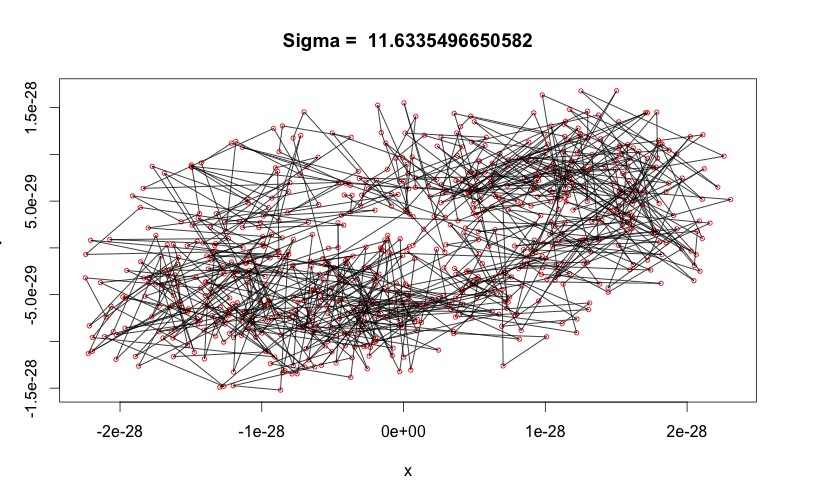}\par\caption{$\protect\sigma = 11.6335$ (large $\protect\sigma$)}
    \includegraphics[width=\linewidth]{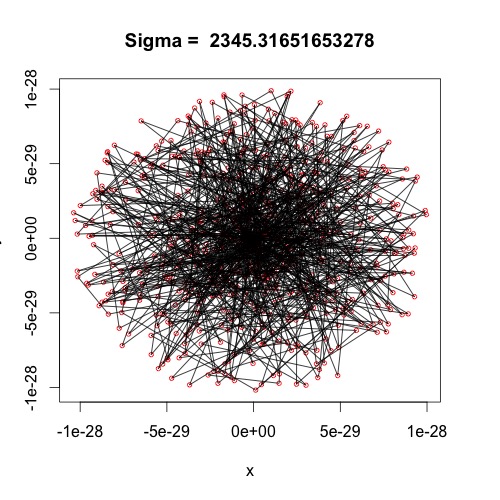}\par\caption{$\protect\sigma = 2345.31$ (large $\protect\sigma$)}
    
\end{multicols}
\centering\caption{$\protect \mathbf{Z(\mathbf{X},\sigma)} \text{ for } \mathbf{X} = $ toroidal helix, for different choices of $\protect \sigma$ }
\end{figure}

Within Figure 1, in subfigure (a) we provide a graph of $L^2$ energy, $\mathbf{E}(\mathbf{Z}(\sigma))$ (y-axis) with respect to varying $\sigma$ (x-axis) for a very small $n = 4$ points on a unit-circle. By perimeter, $\mathbf{P}(\mathbf{Z}(\sigma))$ we refer to equation 4 computed with $L^{1}$ instead of $L^{2}$ norms in the first and second summands. The subfigure (b) in Figure 1 provides the graph of $\mathbf{P}(\mathbf{Z}(\sigma))$ for the same $n=4$ points on a unit-circle. For $n=750$ points on a unit-circle we provide graphs of $\mathbf{E}(\mathbf{Z}(\sigma))$ and $\mathbf{P}(\mathbf{Z}(\sigma))$ in subfigures (c) and (d) of Figure 1. Within Figure 2 in subfigures (a) and (b) we provide $\mathbf{E}(\mathbf{Z}(\sigma))$ and $\mathbf{P}(\mathbf{Z}(\sigma))$ for $n=750$ points lying on a toroidal helix. By $\sigma_*$ we refer to the minimizer of $\mathbf{E}(\mathbf{Z}(\sigma))$. We can obtain $\sigma_*$ through gradient descent followed by a tunneling phase as detailed in Algorithm 1 of this paper.  Gradient descent is a popular algorithm for finding a local minima while tunneling is a novel way to find a local minima better than the one obtained through gradient descent. Hence, one could also apply multiple iterations of gradient descent followed by tunneling where each iteration consists of one followed by another. \par In Figures 3, 4 and 5, for $\mathbf{X}$ lying on a toroidal helix, we provide $\mathbf{Z}(\sigma)$ for small choices of $\sigma$ away from $\sigma_*$. The points are in red. We connect the points in an order $\mathbf{Z}_1, \mathbf{Z}_2 \ldots \mathbf{Z}_n$ with straight lines in black. We see that they do not form polygons when the choice of $\sigma$ is away from $\sigma_*$. In Figures 6, 7 and 8 we show corresponding geometries obtained for choices of $\sigma$ around optimal $\sigma_*$. We show especially in Figure 8 that as $\sigma \to \sigma_*$, we obtain a polygon, as there are no intersections. In Figures 9, 10 and 11 we show the geometries obtained for choices of large $\sigma$ away from $\sigma_*$ again do not form a polygon.\par We show in the sub-figures of 8(a) and 8(b) that the value of $log(\sigma)$ which gives the least $L^2$ energy $\mathbf{E}(\mathbf{Z}(\sigma))$ and $L^1$ perimeter $\mathbf{P}(\mathbf{Z}(\sigma))$ corresponds to the $\sigma$ that happens to be a non self-intersecting polygon in the resultant geometries which happens to be the case precisely in Figure 14. We also see in Figures 1 and 2 that our derived limit theorems do hold true, as the graph begins to flatten out for large choices of $\sigma$, as the values go away from $\sigma_*$. We are also optimistic about the fact that these graphs look like weakly unimodal functions unlike them being highly non-convex for the case of the circle and toroidal helix.

\subsection{Experiments with real data:}
In this subsection, we show some experiments with real biological data from four mice, all of which were infected by malaria and treated for cure in experiments conducted by microbiologists and immunologists \cite{Malaria} at Stanford University. The link to the mice dataset from the biological experiment on mice is here: \url{http://journals.plos.org/plosbiology/article?id=10.1371/journal.pbio.1002436#sec024}. Three of the mice survived with treatment while the other did not. For each mouse we collect the data set of several characteristics for survivor mice that we intuitively believe to repeat with time, i.e. are periodic functions of time. This is because certain physical traits of subjects (patient or mice) show repeating pattern at beginning and end of a disease if the patient survives. This however is not the case normally for non-survivor patients as with them we do not see these physical traits to be periodic with time. For example, the red bloodcell count (RBC) of a mouse had a higher value in the beginning period of malaria, and then decreased as the disease got severe, and eventually increased again with the treatment and came back to normal. As another characteristic, the bacterial count was less in the beginning, but with the disease being severe, it increased and then dropped again with treatment. \par

These periodic behaviors of certain, say $d$ number of physical traits with approximately or exactly equal periods of a survivor patient imply that when plotted not against time, but against each other, they will most probably form a loop structure in $\mathbb{R}^d$. As a model example, one can think of the pair of periodic functions $sin(t), cos(t)$, which form the unit circle when plotted against each other. However, if the number of traits $d$ is bigger than $3$, it is hard to conclude from the data set whether they form a loop or not, and hence whether the patient is a survivor or not. This is where we apply our fast dimensionality reduction technique to obtain two dimensional projections for varying bandwidth $\sigma$, and check whether for the energy minimizing bandwidth mentioned in section $5$, we obtain a polygon. If we do, we will conclude that the patient was survivor, otherwise non-survivor. We can obtain this energy-minimizing bandwidth by either searching through a grid of discrete choices of $\sigma$ or through Algorithm 1 above. As part of some pre-processing we apply Lowess local regression to each variable in order to smooth the data with small smoothing parameter $\alpha$ that leads to utilization of a smaller proportion of total data points while performing local regression.\par
In our experiments with mice data, the number of samples (points) considered are $n=25$ and the number of attributes considered are $d=4$. The four attributes are logarithm of parasite density, red blood cell count, temperature and weight of the mouse under consideration.\par
For survivor mouse $3$, the plot of $log(\sigma)$ vs energy $E(\sigma)$ is given in Figure 12. We notice that at $\sigma_*=3$ the energy $E(\sigma)$ is minimized and we show in Figure 13 that the corresponding two dimensional projection $Z(\sigma_{*})$ is indeed a non self-intersecting polygon. 

\begin{figure*}[ht]
\begin{multicols}{2}
     \includegraphics[height=0.6\linewidth]{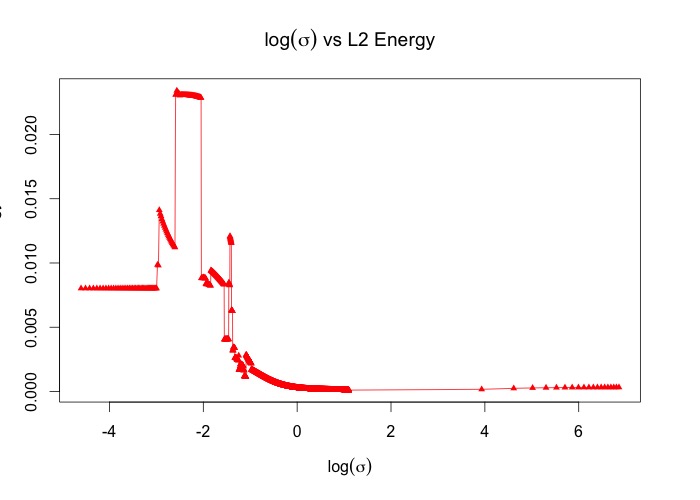} \caption{ $\protect log(\sigma)$ Vs. $\protect \mathbf{E(\sigma)}$ plot for surviving mouse 3 }     
    \includegraphics[height=0.6\linewidth]{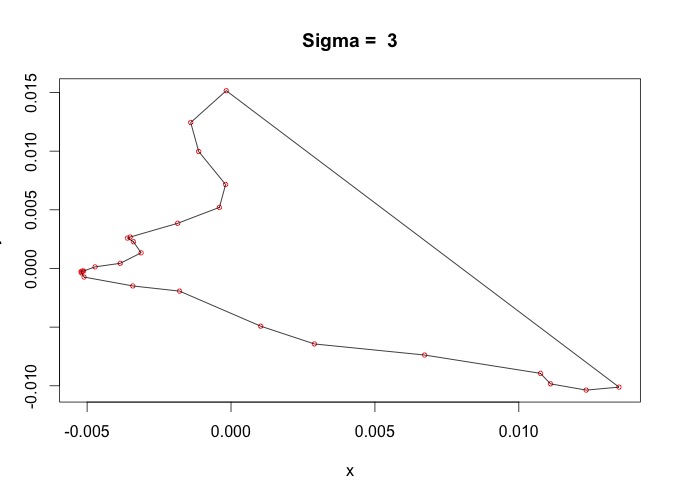} 
    \caption{(Mouse 3) Non self-intersecting polygon at $\protect \sigma_{\ast} = 3$ }
\end{multicols}
\begin{multicols}{2}
    \includegraphics[width=0.7\linewidth]{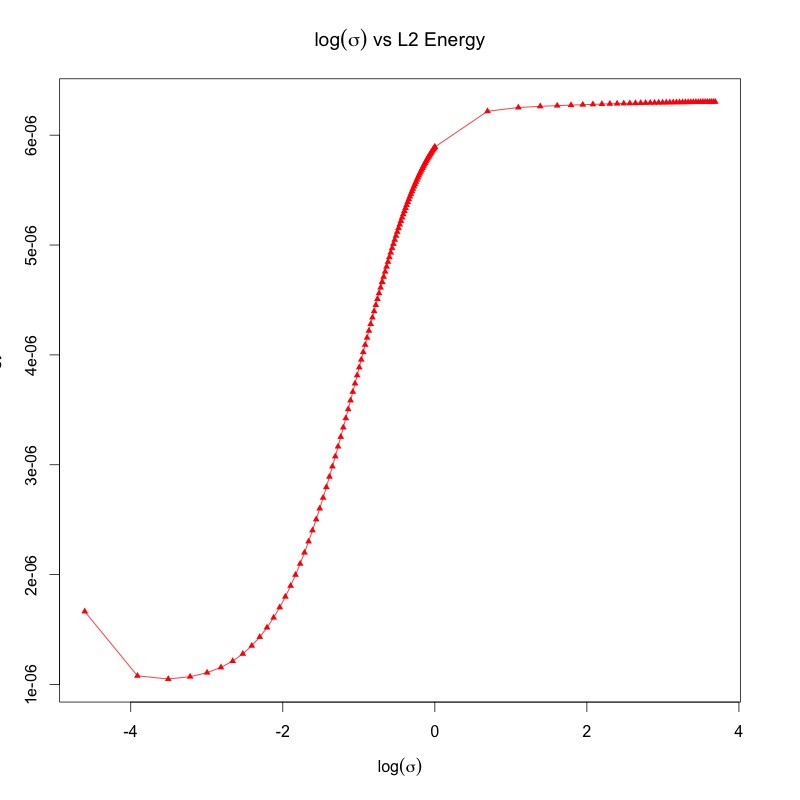} 
    \caption{$\protect log(\sigma)$ Vs. $\protect \mathbf{E(\sigma)}$ plot for surviving mouse 2} 
     \includegraphics[width=0.75\linewidth]{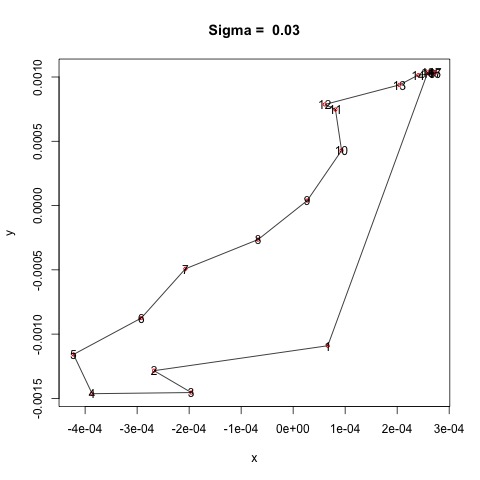} 
    \caption{(Mouse 2) Non self-intersecting polygon at $\protect \sigma_{\ast} = 0.03$ } 
\end{multicols}
\begin{multicols}{2}
 \includegraphics[width=0.7\linewidth]{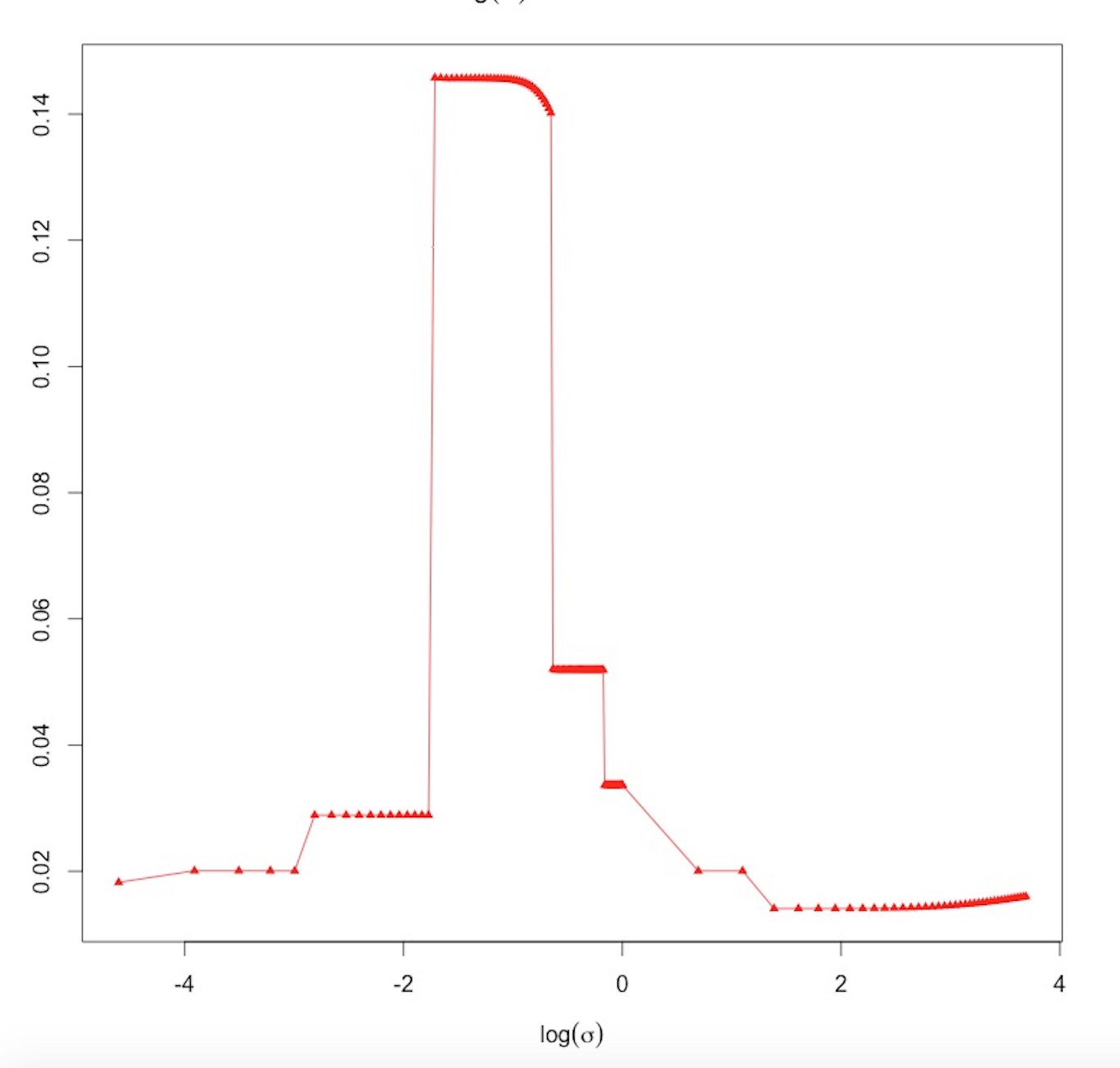} 
 \caption{$\protect log(\sigma)$ Vs. $\protect \mathbf{E(\sigma)}$ plot for non-surviving mouse}
    \includegraphics[width=0.7\linewidth]{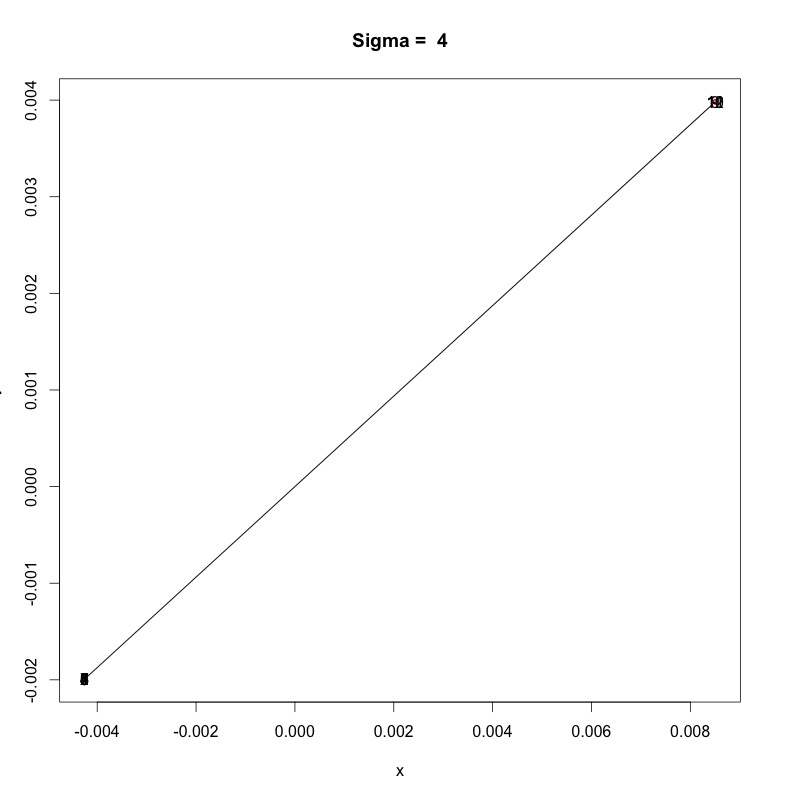} 
    \caption{Non self-intersecting polygon at $\protect \sigma_{\ast} = 4$}
\end{multicols}
\end{figure*}
For survivor mouse $2$, the energy $E(\sigma)$ vs log(bandwidth $\sigma$) is shown in Figure 14. Here, $\sigma_*=0.03$ minimizes the energy $E(\sigma)$ and we see in Figure 15 that the corresponding two dimensional projection $Z\sigma_{*}$ is indeed a non self-intersecting polygon. Here the graph isn't visibly asymptotic for the survivor mouse 2 near the origin, but that's most probably caused by the lack of an ideal (as in exact) periodic pattern in higher dimensions. After the survivor mice experiments we now show and contrast the energy plot for the non-survivor mouse.\\
Quite interestingly, at $E(\sigma)$ minimizing $\sigma = \sigma_* = 4$ when we plot $E(\sigma_*)$ vs $log(\sigma_*)$ in Figure 16, the graph looks very different than the  survivor mice data, or even the energy plots obtained for synthetic data lying on a circle or a toroidal helix as shown in Figures 1 and 2, significant parts of which show convexity. The plot for the non-survivor mice is not convex and furthermore, we notice that 
$\sigma_{*}=4$ minimizes the energy $E(\sigma)$, but the corresponding $Z(\sigma_*)$ is shown in Figure 17. We note that $Z(\sigma=4)$ here is not a non-self intersecting polygon, and hence by our main result we conclude that the test data for the physical traints of the non-survivor mouse was not periodic with time, indicating that the mouse was not a survivor, which matches up with the fact. \par

 Finally, we would also like to state that the computation of our proposed $\mathbf{E}(\mathbf{Z}(\sigma))$ only requires $O(n)$ operations per choicse of $\sigma$, espe
 cially as they are computed on an ordered set of points. Also, each column of $Z(\sigma)$ can be computed in nearly $O(mlog^{1/2}(n))$ time for each choice of $\sigma$ where $m$ is the number of non-zero entries. This complexity is due to the fact that we can obtain the fast manifold learning map $Z(\sigma)$ for any choice of $\sigma$ by solving a symmetric diagonally dominant (SDD) linear system of equations \cite{FastSDD}.
 
\section{Conclusion and Future Work:}

In this paper, we provide a test to check for circularity or periodicity in high dimensional data sets by looking at their nonlinear projections into $\mathbb{R}^2$ given by the fast manifold learning map introduced in \cite{FastSDD}. There are several directions in which such tests can be generalized to. For example, if the pattern we expect in the high-dimensional data is topologically more complex but still has one dimension, for example, say as a wedge of circles, it would be interesting to see if our projection method can detect the corresponding topological structures after the nonlinear projection. Another direction could be detecting topological structures with two dimensions, for example of that of a sphere or a torus, by a modification of our projection method onto two dimensions. We would also like to investigate speeding up the computations for finding the minimizer of $\mathbf{E(Z(\sigma))}$ as the manifold learning map $\mathbf{Z}(\sigma)$ already involves fast computations as in \cite{FastSDD}.

\end{document}